\pgfplotsset{compat=1.14}
\newcites{supp}{References}
\newcommand{\myparagraph}[1]{\vspace{0.0pt}\noindent{\bf #1}}
\newcommand{\yolox}{YOLOX\xspace}
\DeclareMathOperator*{\argmax}{argmax} %
\definecolor{codeblue}{rgb}{0.25,0.5,0.5}
\algnewcommand\algorithmicinput{\textbf{Input:}}
\algnewcommand\algorithmicoutput{\textbf{Output:}}
\algnewcommand\INPUT{\item[\algorithmicinput]}
\algnewcommand\OUTPUT{\item[\algorithmicoutput]}
\algnewcommand\RETURN{\item[\algorithmicreturn]}
\algnewcommand\algorithmicinputt{\textbf{Track:}}
\algnewcommand\TRACK{\item[\algorithmicinputt]}
\newcommand{\LineComment}[1]{\hfill \textcolor{codeblue}{\# #1}}
\newcommand{\AlgoComment}[1]{\textcolor{codeblue}{\# #1}}
\newcommand{\greencell}{%
  \@ifstar{\cellcolor[HTML]{E0FCE0}}{}}
\definecolor{mygreen}{HTML}{E0FCE0}
\newcommand{\cmark}{\ding{51}}%
\newcommand{\xmark}{\ding{55}}%
\colorlet{colorFstTransp}{Green!25}       %
\colorlet{colorSndTransp}{SpringGreen!45} %
\colorlet{colorTrdTransp}{Yellow!30}      %
\colorlet{colorFst}{Green!99}       %
\colorlet{colorSnd}{SpringGreen!99} %
\colorlet{colorTrd}{Yellow!99}      %
\colorlet{colorGray}{Gray!30}      %
\newcommand{\gr}{\color{gray}}      %
\definecolor{lightgray}{gray}{0.9}
\DeclareAcronym{ssl}{
  short = SSL ,
  long = self-supervised learning
}
\DeclareAcronym{toag}{
  short = TOAG ,
  long = temporal object appearance graph
}
\DeclareAcronym{mot}{
  short = MOT ,
  long = multiple object tracking
}
\DeclareAcronym{reid}{
  short = Re-ID ,
  long = re-identification
}
\DeclareAcronym{nms}{
  short = NMS ,
  long = non maximum suppression
}
\DeclareAcronym{rpn}{
  short = RPN ,
  long = region proposal network
}
\DeclareAcronym{iou}{
  short = IoU ,
  long = Intersection over Union 
}
\DeclareAcronym{roi}{
  short = RoI ,
  long = region of interest ,
  long-plural-form = regions of interest
}
\DeclareAcronym{fpn}{
  short = FPN ,
  long = Feature Pyramid Network ,
}
\DeclareAcronym{deta}{
  short = DetA ,
  long = detection accuracy ,
}
\DeclareAcronym{assa}{
  short = AssA ,
  long = association accuracy ,
}
\crefname{section}{Sec.}{Secs.}
\Crefname{section}{Section}{Sections}
\Crefname{table}{Table}{Tables}
\crefname{table}{Tab.}{Tabs.}
\Crefname{algorithm}{Algorithm}{Algorithms}
\crefname{algorithm}{Alg.}{Algs.}
\newread\imgstream
\def\new@kvginclip#1{}
\def\new@kvgintrim#1{}
\let\old@kvginclip\KV@Gin@clip
\let\old@kvgintrim\KV@Gin@trim
\let\oldincludegraphics\includegraphics
\renewcommand{\includegraphics}[2][]{%
  \immediate\read\imgstream to \src
  \immediate\read\imgstream to \removecrop
  \ifnum\removecrop=1
      \let\KV@Gin@clip\new@kvginclip
      \let\KV@Gin@trim\new@kvgintrim
  \fi
  \oldincludegraphics[#1]{\src}%
  \let\KV@Gin@clip\old@kvginclip
  \let\KV@Gin@trim\old@kvgintrim}
\begin{document}

\title{Walker: Self-supervised Multiple Object Tracking by Walking on Temporal Appearance Graphs}

\titlerunning{Walker: Self-supervised Multiple Object Tracking}
\author{Mattia Segu\inst{1,2} \and Luigi Piccinelli\inst{1} \and Siyuan Li\inst{1} \and \\ Luc Van Gool\inst{1,3} \and Fisher Yu\inst{1} \and Bernt Schiele\inst{2}}
\institute{ETH Zurich, Switzerland \and
Max Planck Institute for Informatics, Saarland Informatics Campus, Germany \and INSAIT, Bulgaria \\
\url{https://github.com/mattiasegu/walker}
}

\authorrunning{M.~Segu et al.}

\maketitle
\begin{abstract}
   The supervision of state-of-the-art multiple object tracking (MOT) methods requires enormous annotation efforts to provide bounding boxes for all frames of all videos, and instance IDs to associate them through time. 
   To this end, we introduce Walker, the first self-supervised tracker that learns from videos with sparse bounding box annotations, and no tracking labels. 
   First, we design a quasi-dense temporal object appearance graph, and propose a novel multi-positive contrastive objective to optimize random walks on the graph and learn instance similarities.
   Then, we introduce an algorithm to enforce mutually-exclusive connective properties across instances in the graph, optimizing the learned topology for MOT.
   At inference time, we propose to associate detected instances to tracklets based on the max-likelihood transition state under motion-constrained bi-directional walks.
   Walker is the first self-supervised tracker to achieve competitive performance on MOT17, DanceTrack, and BDD100K.
   Remarkably, our proposal outperforms the previous self-supervised trackers even when drastically reducing the annotation requirements by up to 400x. 
   \keywords{Multiple Object Tracking \and Self-supervised Learning}
\end{abstract}

\section{Introduction}
\Ac{mot} represents a cornerstone of modern perception systems for challenging computer vision applications, such as autonomous driving~\cite{ess2010object}, video surveillance~\cite{elhoseny2020multi}, and augmented reality~\cite{park2008multiple}.
Following the tracking-by-detection paradigm, multiple object trackers detect objects in all frames (object detection) while associating them through time (data association) to obtain tracklets.
Modern trackers~\cite{du2023strongsort,aharon2022bot,wang2022smiletrack} achieve state-of-the-art performance by combining motion heuristics~\cite{bewley2016simple,wojke2017simple,zhang2022bytetrack} with learned appearance descriptors~\cite{wojke2017simple,pang2021quasi,zhang2021fairmot} for data association.
As such, the supervision of multiple object trackers requires annotating detection labels - \ie bounding boxes - 
in every frame for all the objects of the categories of interest, and tracking labels as instance IDs to associate objects through time (\cref{fig:teaser}, top).
Thus, the annotation cost of \ac{mot} datasets~\cite{dendorfer2021motchallenge,sun2022dancetrack,yu2020bdd100k,sun2022shift,sun2020scalability} is linear in the number of frames, and labeling large video datasets can be prohibitive.

\begin{table}[t]
  \centering
  \captionsetup{type=figure}
    \resizebox{\textwidth}{!}{
    \fontsize{5}{7}\selectfont
    \setlength{\tabcolsep}{0.5pt}
    \begin{tabular}{c|ccccccccc}    
      & Det.  & IDs  &  &  &  & &   \\ \cline{1-3}
    \raisebox{+0.0\normalbaselineskip}[0pt][0pt]{\rotatebox[origin=c]{90}{\makecell{Sup. \\ MOT}}}  & \raisebox{+0.0\normalbaselineskip}[0pt][0pt]{Dense}  & \raisebox{+0.0\normalbaselineskip}[0pt][0pt]{\cmark} & \includegraphics[width=0.14\textwidth,valign=m,trim={16.0cm 5.0cm 1.0cm 4.0cm},clip]{figures/teaser/dense-supervised/0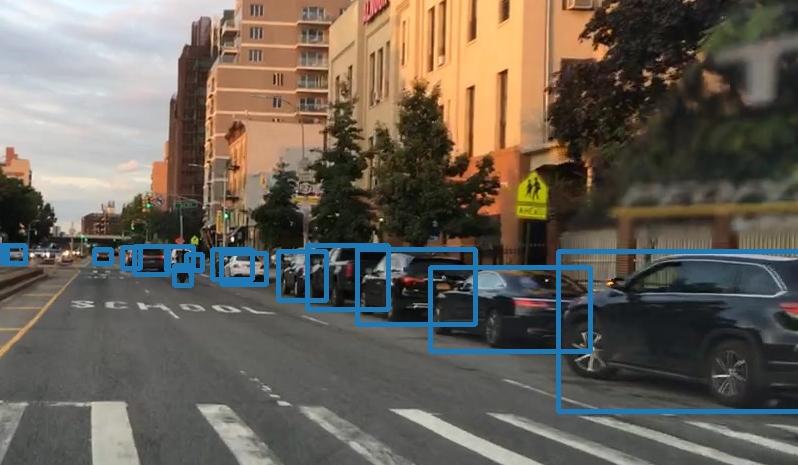} & \includegraphics[width=0.14\textwidth,valign=m,trim={16.0cm 5.0cm 1.0cm 4.0cm},clip]{figures/teaser/dense-supervised/0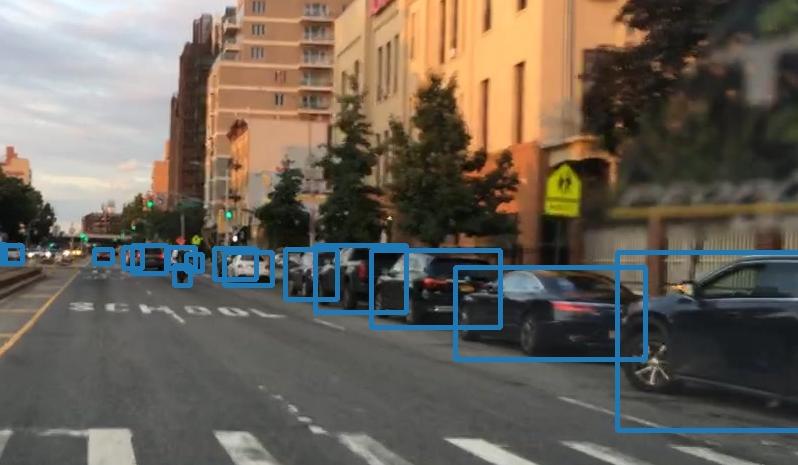} & \includegraphics[width=0.14\textwidth,valign=m,trim={16.0cm 5.0cm 1.0cm 4.0cm},clip]{figures/teaser/dense-supervised/0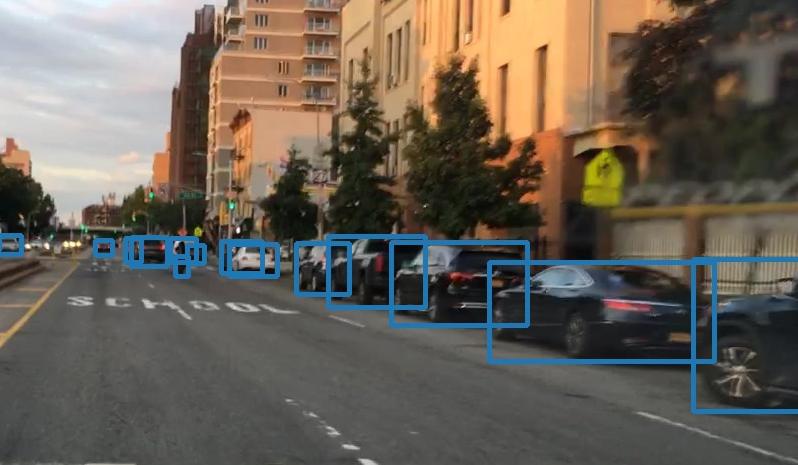} & \includegraphics[width=0.14\textwidth,valign=m,trim={16.0cm 5.0cm 1.0cm 4.0cm},clip]{figures/teaser/dense-supervised/0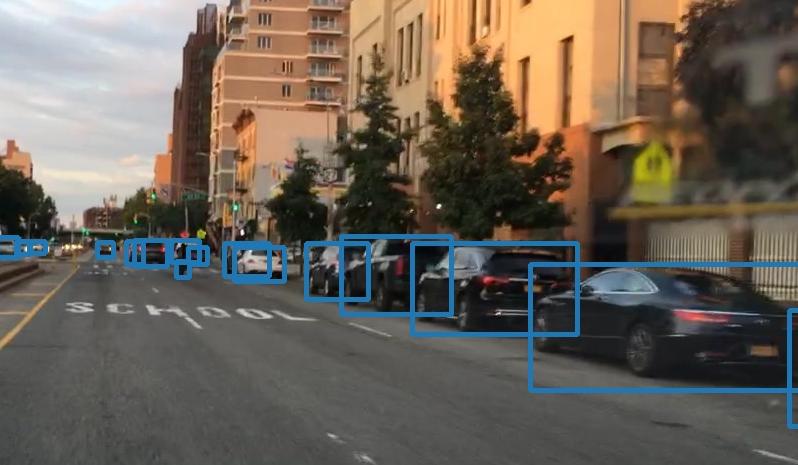} & \includegraphics[width=0.14\textwidth,valign=m,trim={16.0cm 5.0cm 1.0cm 4.0cm},clip]{figures/teaser/dense-supervised/0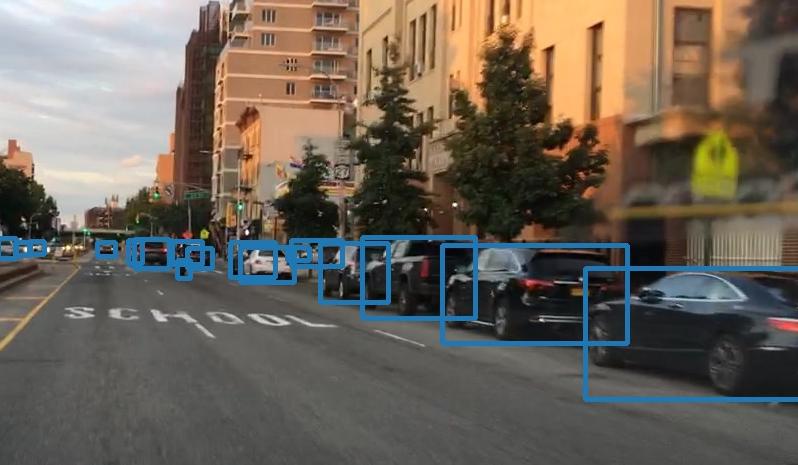} & \includegraphics[width=0.14\textwidth,valign=m,trim={16.0cm 5.0cm 1.0cm 4.0cm},clip]{figures/teaser/dense-supervised/0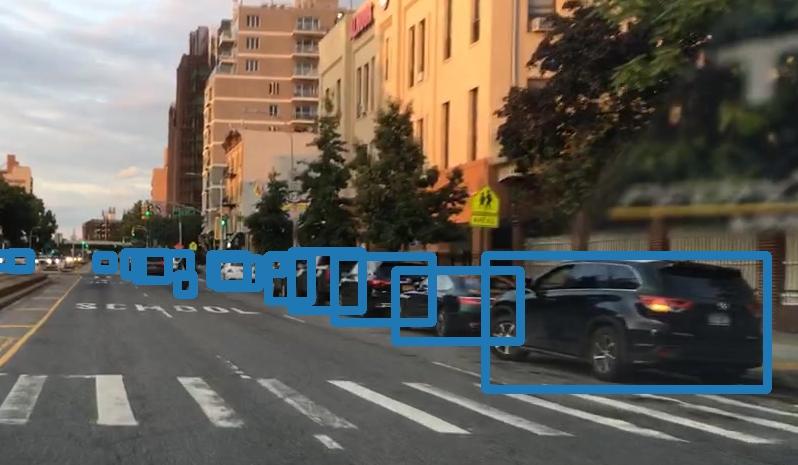} & \includegraphics[width=0.14\textwidth,valign=m,trim={16.0cm 5.0cm 1.0cm 4.0cm},clip]{figures/teaser/dense-supervised/0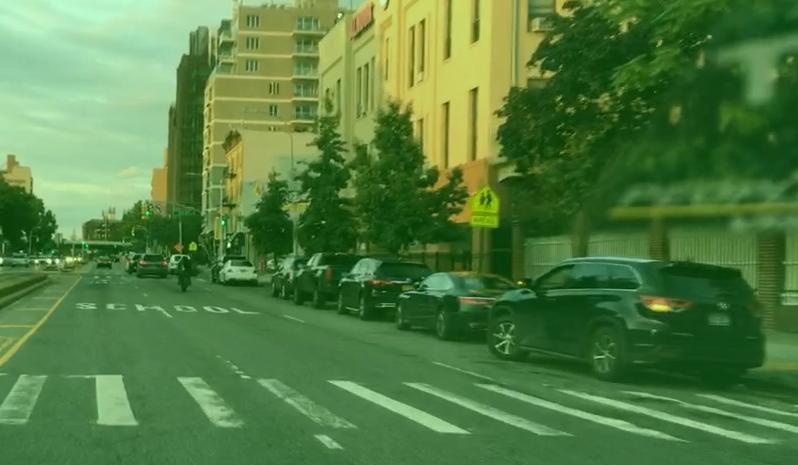} \\ \cline{1-3}
    \raisebox{+0.0\normalbaselineskip}[0pt][0pt]{\rotatebox[origin=c]{90}{\makecell{Self-sup. \\ Re-ID}}}  &  \raisebox{+0.0\normalbaselineskip}[0pt][0pt]{Dense}  & \raisebox{+0.0\normalbaselineskip}[0pt][0pt]{\xmark} & \includegraphics[width=0.14\textwidth,valign=m,trim={16.0cm 5.0cm 1.0cm 4.0cm},clip]{figures/teaser/dense-self_supervised/000010.jpg} & \includegraphics[width=0.14\textwidth,valign=m,trim={16.0cm 5.0cm 1.0cm 4.0cm},clip]{figures/teaser/dense-self_supervised/000011.jpg} & \includegraphics[width=0.14\textwidth,valign=m,trim={16.0cm 5.0cm 1.0cm 4.0cm},clip]{figures/teaser/dense-self_supervised/000012.jpg} & \includegraphics[width=0.14\textwidth,valign=m,trim={16.0cm 5.0cm 1.0cm 4.0cm},clip]{figures/teaser/dense-self_supervised/000013.jpg} & \includegraphics[width=0.14\textwidth,valign=m,trim={16.0cm 5.0cm 1.0cm 4.0cm},clip]{figures/teaser/dense-self_supervised/000014.jpg} & \includegraphics[width=0.14\textwidth,valign=m,trim={16.0cm 5.0cm 1.0cm 4.0cm},clip]{figures/teaser/dense-self_supervised/000015.jpg} & \includegraphics[width=0.14\textwidth,valign=m,trim={16.0cm 5.0cm 1.0cm 4.0cm},clip]{figures/teaser/dense-self_supervised/000016.jpg} \\ \cline{1-3}
    \raisebox{+0.0\normalbaselineskip}[0pt][0pt]{\rotatebox[origin=c]{90}{\makecell{Ours}}}  &  \raisebox{+0.0\normalbaselineskip}[0pt][0pt]{Sparse}  & \raisebox{+0.0\normalbaselineskip}[0pt][0pt]{\xmark} & \includegraphics[width=0.14\textwidth,valign=m,trim={16.0cm 5.0cm 1.0cm 4.0cm},clip]{figures/teaser/sparse-self_supervised/000010.jpg} & \includegraphics[width=0.14\textwidth,valign=m,trim={16.0cm 5.0cm 1.0cm 4.0cm},clip]{figures/teaser/sparse-self_supervised/000011.jpg} & \includegraphics[width=0.14\textwidth,valign=m,trim={16.0cm 5.0cm 1.0cm 4.0cm},clip]{figures/teaser/sparse-self_supervised/000012.jpg} & \includegraphics[width=0.14\textwidth,valign=m,trim={16.0cm 5.0cm 1.0cm 4.0cm},clip]{figures/teaser/sparse-self_supervised/000013.jpg} & \includegraphics[width=0.14\textwidth,valign=m,trim={16.0cm 5.0cm 1.0cm 4.0cm},clip]{figures/teaser/sparse-self_supervised/000014.jpg} & \includegraphics[width=0.14\textwidth,valign=m,trim={16.0cm 5.0cm 1.0cm 4.0cm},clip]{figures/teaser/sparse-self_supervised/000015.jpg} & \includegraphics[width=0.14\textwidth,valign=m,trim={16.0cm 5.0cm 1.0cm 4.0cm},clip]{figures/teaser/sparse-self_supervised/000016.jpg} \\ \cline{1-3}
    \end{tabular}
    }
      
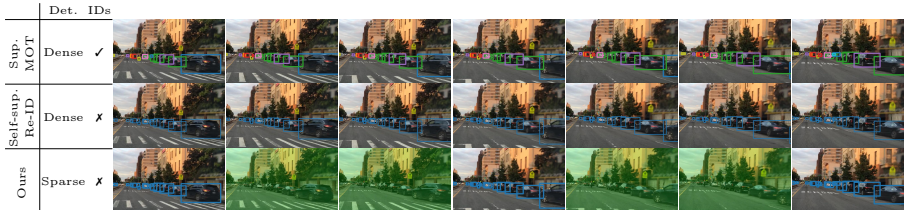
\captionof{figure}{
      Supervised MOT requires dense tracking labels~(top), \ie dense detection annotations at each frame and instance labels (shown by coloring boxes by instance ID) across frames. Self-supervised Re-ID assumes dense detection labels and no instance labels~(middle). We explore self-supervised MOT in a more practical sparsely-annotated setting~(bottom),
      with sparse detection annotations every $k$ frames (here $k=3$ for illustration purpose) and no instance labels. 
      Fully-unlabeled frames in green.
      } \label{fig:teaser}
\end{table}%

Self-supervised \ac{mot} - the problem of learning to track in the absence of the instance labels - represents an appealing solution to alleviate the enormous annotation cost.
Nevertheless, 
the most common self-supervised \ac{mot} solutions~\cite{fischer2022qdtrack,segu2023darth,li2023ovtrack,heigold2023video,zhang2021fairmot} only rely on image-level self-supervision. 
By not leveraging the privileged temporal information of video streams, these approaches cannot learn appearance descriptors robust to view changes, and fail to close the gap with supervised \ac{mot}.
Analogously, orthogonal research on self-supervised \ac{reid}~\cite{kim2023ssl,yang2020remots,gan2021self,bastani2021self} traditionally assumes high-quality dense detection annotations in videos (\cref{fig:teaser}, middle), hindering label-efficiency.
We argue that video-level self-supervision should both enable discarding instance ID annotations and greatly sparsify the redundant detection labels (\cref{fig:teaser}, bottom). 

To this end, we introduce Walker, the first self-supervised multiple object tracker to learn from videos with sparse bounding box annotations and no tracking labels.
Walker is a joint detection and tracking model composed of a detector and a cascaded embedding head.
Inspired by~\cite{jabri2020space}, we design a \ac{toag} (\cref{ssec:method_appearance_graph}) that connects object-level \acp{roi} on a pair of key/reference frames.
During training, we propose to self-supervise appearance representations by walking on \acp{toag}.
First, we introduce a novel multi-positive contrastive formulation to optimize cyclic random walks on the graph and learn instance similarities (\cref{ssec:method_cycle}).
Then, we propose an algorithm to identify pseudo-matches between key and reference clusters of detections as the max-likelihood transition states over the cycle walks connecting them. Given such assignments, we enforce a mutually-exclusive graph connectivity across instances as required for \ac{mot} (\cref{ssec:method_forward}). 
At inference time, we propose a more refined appearance similarity metric - namely the \textit{biwalk} - to associate detections to tracklets by finding the max-likelihood transition state under the motion-constrained cycle walks connecting them (\cref{ssec:method_walker_matching}).

Moreover, we investigate the efficacy of self-supervised \ac{mot} by sparsifying the dense detection annotations requirement, \ie providing ground-truth bounding boxes only every $k$ frames in a video (\cref{fig:teaser}, bottom). 
By relying on our video-level self-supervision, we find that Walker effectively leverages fully-unlabeled frames to learn superior appearance representations, significantly outperforming the frame-level self-supervised \ac{mot} state of the art~\cite{fischer2022qdtrack} even when training with up to 400x less annotated frames (\cref{fig:annotation_sparsity_assa}).
Finally, experimental results on MOT17~\cite{dendorfer2021motchallenge}, DanceTrack~\cite{sun2022dancetrack}, and BDD100K~\cite{yu2020bdd100k} highlight that Walker is the first self-supervised tracker competitive with state-of-the-art supervised ones.

We summarize our contributions: 
(i) we introduce Walker, the first self-supervised multi-object tracker to learn  appearance from sparsely annotated videos and no tracking labels;
(ii) we propose a novel video-level self-supervision formulation that learns instance similarities with multi-positive and mutually-exclusive contrastive random walks on temporal object appearance graphs;
(iii) Walker is the first self-supervised tracker competitive with state-of-the-art supervised \ac{mot}, while greatly reducing the annotation requirements.

\section{Related Work}
\myparagraph{Multiple Object Tracking.}
Most \ac{mot} approaches rely on the tracking-by-detection paradigm, \ie objects are detected in each frame while data association matches the detected instances across frames.
\textit{Motion-based} heuristics have long been used to associate objects through time~\cite{reid1979algorithm,bewley2016simple,zhang2022bytetrack}. SORT~\cite{bewley2016simple} first predicts the future location of the tracklets with a Kalman filter~\cite{kalman1960new} and then matches predicted to detected boxes using \ac{iou} as a measure of spatial similarity.
ByteTrack~\cite{zhang2022bytetrack} proposes a two-stage matching strategy to properly utilize low-score detections.
However, motion-based trackers struggle under occlusions, low frame rates, and complex camera and objects motion~\cite{fischer2022qdtrack}.
DeepSORT~\cite{wojke2017simple}, StrongSORT~\cite{du2023strongsort} and BoT-SORT~\cite{aharon2022bot} extend SORT with a stand-alone \ac{reid} module for occlusion-handling, and train it on an external pedestrian re-identification dataset~\cite{zheng2016mars} to extract \textit{appearance-based} representations.
However, their parallel \ac{reid} module undermines efficiency and is trained on external data.
Recent \textit{joint detection and tracking} models~\cite{lu2020retinatrack,wang2020towards,zhang2021fairmot,pang2021quasi,fischer2022qdtrack} extend the detector's feature extractor with an embedding head for efficient appearance extraction.
QDTrack's~\cite{pang2021quasi,fischer2022qdtrack} quasi-dense contrastive formulation proved an effective in-domain appearance-learning scheme~\cite{fischer2022qdtrack}.
Queries in query-based trackers~\cite{sun2020transtrack,meinhardt2022trackformer,zeng2022motr,segu2024samba} are also implicit appearance representations.
While appearance complements motion-based trackers, it comes with a high annotation cost. Training appearance extractors in-domain necessitates tracking datasets to provide detection and instance ID annotations for all frames in a video (\cref{fig:teaser}, top).
Our work overcomes these limitations by proposing a self-supervised appearance-learning algorithm that eliminates the need for instance-association labels, and allows for sparser detection annotations (\cref{fig:teaser}, bottom).

\myparagraph{Self-supervised Re-ID.}
Self-supervised \ac{reid}~\cite{kim2023ssl,yang2020remots,gan2021self,bastani2021self} is the problem of learning instance representations given ground-truth detections (\cref{fig:teaser}, middle). 
\cite{kim2023ssl,collicottself,huang2023multi} learn \ac{reid} with image-level self-supervision via pre-text tasks - \eg image rotation, puzzle solving, reconstruction, MoCo-v2~\cite{chen2020improved}, BYOL~\cite{grill2020bootstrap}.
Other techniques learn \ac{reid} directly on in-domain videos by means of weak clustering labels obtained with tracking algorithms~\cite{ho2020two,karthik2020simple,yang2020remots}, or cycle consistency~\cite{gan2021self,bastani2021self} on ground-truth bounding boxes.
By assuming availability of ground-truth detections, such approaches are not designed for joint detection and tracking.

\myparagraph{Self-supervised Multiple Object Tracking.}
Despite the recent advances in self-supervised correspondence learning in videos~\cite{jabri2020space,wang2019learning,gupta2023siamese}, frame-level self-supervision is the standard in \ac{mot}.
QDTrack-S(tatic)~\cite{fischer2022qdtrack} generates two views of the same frame with data augmentation and optimizes a contrastive loss on the embeddings of different instances. 
Due to its simplicity, this paradigm has been adopted in test-time adaptive~\cite{segu2023darth}, open-vocabulary~\cite{li2023ovtrack,yang2019video,li2024slack} and foundational tracking~\cite{li2024matching}.
However, \ac{mot} requires associating instances through time, and data augmentation cannot mimic the occlusions, pose changes, and distortions of real videos. 
By walking on temporal appearance graphs, our method benefits from the video information to learn superior appearance representations.

\section{Walker}
We introduce our novel self-supervised tracker, Walker.
We report architectural details in \cref{ssec:method_architecture}, and define our proposed quasi-dense temporal object appearance graph (\cref{ssec:method_appearance_graph}). 
We then introduce our techniques to train the \ac{toag} and learn instance descriptors from unlabeled videos: a novel multi-positive contrastive objective to optimize random walks on the appearance graph - after which Walker is named -    (\cref{ssec:method_cycle}); our approach to identify pseudo-assignments and optimize mutually-exclusive connectivity on the graph (\cref{ssec:method_forward}).
Finally, we detail Walker's data association scheme and introduce our biwalk similarity metric (\cref{ssec:method_walker_matching}) to track objects based on the learned appearance graph.

\subsection{Architecture} \label{ssec:method_architecture}
Our tracker can be coupled with any two-stage and one-stage detector for end-to-end training. 
The object detector is composed of a feature extractor with a \ac{fpn} to extract multi-scale feature maps and a bounding box head.
An additional embedding head extracts deeper appearance representations for each \ac{roi} after \ac{roi}Align~\cite{he2017mask}.
For two-stage detectors, we treat the region proposals as \acp{roi}; for one-stage detectors, the detections after \ac{nms}.
Following state-of-the-art appearance-~\cite{fischer2022qdtrack} and motion-based~\cite{zhang2022bytetrack} trackers, we choose \yolox as \textit{detector}, while our \textit{embedding head} is a 4conv-1fc head with group normalization~\cite{wu2018group} to extract 256-dimensional features as in QDTrack~\cite{fischer2022qdtrack}.

\subsection{Temporal Object Appearance Graphs} \label{ssec:method_appearance_graph}
We introduce a self-supervised formulation to learn instance similarities by walking on quasi-dense temporal object appearance graphs (\acp{toag}).
Inspired by the contrastive random walk for self-supervised pixel-level correspondences~\cite{jabri2020space}, we represent each video as a quasi-dense~\cite{fischer2022qdtrack} directed appearance graph $\mathcal{G}$ where nodes are the quasi-dense \acp{roi}, and weighted edges connect nodes in neighboring frames.
Unlike~\cite{jabri2020space}, our work redefines the appearance graph to walk on quasi-dense object regions, introduces a new multi-positive self-supervised objective (\cref{ssec:method_cycle}), and enforces mutually-exclusive connective properties across instances (\cref{ssec:method_forward}) to make the learned topology optimal for \ac{mot}. 

\myparagraph{Nodes Definition.} 
We define the graph nodes for an image $I_t$ at time $t$ as its \acp{roi}, and describe them by their appearance embeddings.
Given the set of high-confidence detections ${\mathcal{D}_t^{\text{high}} = \{d_t^i \; | \; \text{conf}(d_t^i) \ge \beta_{\text{obj}} \! = \! 0.3 \}}$
predicted by the detector on $I_t$, or the set of ground-truth boxes ${\hat{\mathcal{D}}_t = \{d_t^i\}}$, we define a \ac{roi} as positive to a detection $d^i_t$ if their \ac{iou} is higher than $\alpha_1 \! = \! 0.7$, negative if lower than $\alpha_2 \! = \! 0.3$. 
We use \ac{roi} Align~\cite{he2017mask} to pool feature maps at different levels in the \ac{fpn}~\cite{lin2017feature} according to the \ac{roi} scales.
For each frame $I_t$, we select 128 positive \acp{roi} $\mathbf{Q}_t^+$ and 128 negative $\mathbf{Q}_t^-$ ones, and describe the nodes $\mathbf{Q}_t = \mathbf{Q}_t^+ \cup \mathbf{Q}_t^-$ by the corresponding embeddings matrix $Q_t = [Q_t^+, Q_t^-]$ obtained by applying the embedding head on the pooled \ac{roi} features.
In contrast to~\cite{jabri2020space}, our nodes are object-centric \acp{roi} instead of patches to learn instance-specific representations.

\myparagraph{Cluster Definition.}
Given the quasi-dense nature of our \ac{toag}, multiple nodes can represent different views of the same object.
We define the cluster {\small${\mathcal{C}_t^i = \mathcal{C}_t(\textbf{q}_t^i) = \{\textbf{q}_t^j \in \textbf{Q}_t \; | \; \text{\ac{iou}}(\textbf{q}_t^j, \textbf{q}_t^i) \ge \alpha_1 = 0.7\}}$} as the set of nodes sufficiently overlapping with the $i$-th node $\textbf{q}_t^i$ in $I_t$.
Given the high overlap, all \acp{roi} in a cluster $\mathcal{C}_t(\textbf{q}_t^i)$ typically represent the same instance, \ie a specific pedestrian. 

\myparagraph{Edges Definition.}
We define the edges $A_t^{t'}(i,j)$ connecting the nodes $\mathbf{q}_t^i$ and $\mathbf{q}_{t'}^j$ across $I_t$ and $I_{t'}$ by the cosine similarities 
{\small$c(q_t^i, q_{t'}^j) = (q_t^i \cdot q_{t'}^j)/(||q_t^i|| ||q_{t'}^j||)$}
between the nodes' embeddings $q_t^i$ and $q_{t'}^j$, transformed into non-negative affinities by a softmax with temperature $\tau$ over edges departing from each node $\mathbf{q}_t^i$ directed to all nodes $\mathbf{q}_{t'}^i \in \textbf{Q}_{t'}$. $A_t^{t'}$ is the local transition matrix from $\textbf{Q}_{t}$ to $\textbf{Q}_{t'}$ on $\mathcal{G}$:
\setlength{\abovedisplayskip}{4.0pt}
\setlength{\belowdisplayskip}{4.0pt}
\small\begin{equation}
A_t^{t'}(i,j) = \texttt{softmax}_i(Q_t Q_{t'}^\top)(i, j) =
             \frac{exp(c(q_t^i, q_{t'}^j)/\tau)}{\sum_{l=1}^{N}exp(c(q_t^i, q_{t'}^l)/\tau)},
\end{equation}\normalsize
Unlike~\cite{jabri2020space} and since our edges represent the instance similarities used for tracking, the optimal topology of $\mathcal{G}$ for \ac{mot}  must present mutually-exclusive connective properties across clusters of nodes - \ie nodes from one instance can only transition to other nodes of the same instance - which we enforce in \cref{ssec:method_forward}. 

\myparagraph{Temporal Appearance Graph Definition.} An appearance graph $\mathcal{G}$ defined by the nodes and edges described above is a spatio-temporal Markov chain whose transition probabilities between its quasi-dense states are given by the non-negative affinity matrix $A_t^{t'}(i,j)=P(X_{t'}=j|X_{t}=i)=p_{X_{t'} | X_{t}}(j|i)$, where $X_t$ is the state of a walker at time $t$ and $P(X_t=i)$ is the probability of being at node $i$ at time $t$.
In \cref{ssec:method_cycle,ssec:method_forward,ssec:method_total_loss} we show how to learn a mutually-exclusive \ac{toag}, and in \cref{ssec:method_walker_matching} how to use it for tracking.

\begin{figure*}[t]
    \centering
    \footnotesize
    \includegraphics[width=0.8\textwidth,trim={0.0cm 0.0cm 5.65cm 9.0cm},clip]{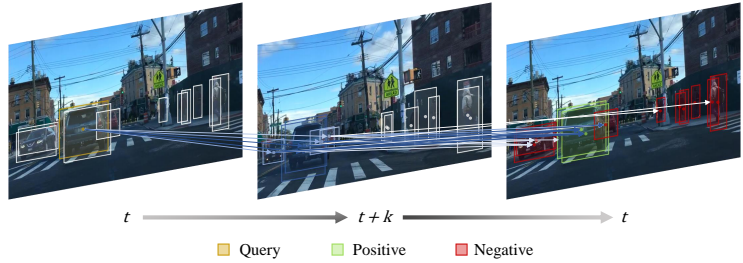}
    \caption{\textbf{Multi-positive Cycle Consistency.} Illustration of the proposed multi-positive cycle consistency on quasi-dense \acp{toag} (\cref{ssec:method_cycle}). We show the cycle walk departing from a given query node (yellow). The multiple positive (negative) nodes are in green (red). For ease of visualization, we only show the high-likelihood transitions. } \label{fig:method_cycle}
\end{figure*}

\subsection{Learning Instance Representations by Walking on Cyclic Object Appearance Graphs} \label{ssec:method_cycle}
In absence of instance ID labels, we propose to self-supervise instance similarities (edges) by optimizing multi-positive contrastive random walks on cyclic \acp{toag}.

\myparagraph{Cycle Walk Definition.}
Given a key image $I_t$ and its bounding box annotations, we randomly sample an unlabeled reference image $I_{t+k}$ from its temporal neighborhood, \ie $k \in [-\hat{k}, \hat{k}]$, with $\hat{k}$ dataset-dependent.
We build a cyclic appearance graph $\mathcal{G}$ (\cref{fig:method_cycle}) as a walk from the positive nodes $\textbf{Q}_t^+$ - likely to represent objects - in the key image $I_t$ to all the nodes $\textbf{Q}_{t+k}$ in the reference image $I_{t+k}$ and back to all 
nodes $\textbf{Q}_t=[\textbf{Q}_t^+,\textbf{Q}_t^-]$ in $I_t$.
The resulting walk $\mathcal{G}: \textbf{Q}_t^+ \rightarrow \textbf{Q}_{t+k} \rightarrow \textbf{Q}_t$ is a Markov chain described by the forward and backward transitions $A_{t^+}^{t+k}$ and ${A}_{t+k}^{t}$, whose chained transition $\bar{A}_{t^+}^{t}$ describes the cycle correspondence as a multi-step walk along the object appearance graph $\mathcal{G}$: 
\setlength{\abovedisplayskip}{4.0pt}
\setlength{\belowdisplayskip}{4.0pt}
\small
\begin{equation}
\bar{A}_{t^+}^{t} = A_{t^+}^{t+k} {A}_{t+k}^{t}  = P_{\mathcal{G}}(X_t | X_{t+k}) P_{\mathcal{G}}(X_{t+k} | X^+_t) = P_{\mathcal{G}}(X_t | X^+_t).
\end{equation}\normalsize

\myparagraph{Multi-positive Cycle Consistency.}
Cycle consistency is satisfied for a node $\textbf{q}_t^{i}$ in $I_t$ if {\small${p^{\mathcal{G}}_{X_t|X_t^+}(i|i) > p^{\mathcal{G}}_{X_t|X_t^+}(j|i) \; \forall \; j \ne i}$}, \ie a cycle walk on $\mathcal{G}$ starting from $\textbf{q}_t^i$ ends on  $\textbf{q}_t^i$ itself.
However, since the above-defined graph is quasi-dense, we can identify multiple positive targets $Y^+_i$ for the walk starting from $\textbf{q}_t^i$ as the cluster $\mathcal{C}_t(\textbf{q}_{t}^i)$ of nodes $\textbf{q}_t^l$ sufficiently overlapping with the starting node $\textbf{q}_t^i$, \ie 
{\small${Y^+_i = \mathcal{C}_t(\textbf{q}_{t}^i) = \{\textbf{q}_t^j \in \textbf{Q}_t \; | \; \text{\ac{iou}}(\textbf{q}_t^j, \textbf{q}_t^i) \ge \alpha_1 = 0.7\}}$}.
All other nodes are considered negative targets to $\textbf{q}_t^i$, \ie 
{\small${Y^-_i = \{\textbf{q}_t^{j} \; | \; \textbf{q}_t^{i} \notin Y^+_i \; \forall \; \textbf{q}_t^j \in \textbf{Q}_t\}}$}. 
\cref{fig:method_cycle} illustrates the positive (green) and negative (red) targets for a cycle walk starting from a query node (yellow).
We consider \textit{multi-positive cycle consistency} satisfied if:
\setlength{\abovedisplayskip}{4.0pt}
\setlength{\belowdisplayskip}{4.0pt}
\small\begin{equation}
p^{\mathcal{G}}_{X_t |X_t^+}(Y_i^+ | i) = \sum_{\textbf{q}_t^{l} \in Y^+_i}{p^{\mathcal{G}}_{X_t | X_t^+}(l|i) > p^{\mathcal{G}}_{X_t|X_t^+}(j|i) \; \forall \; \textbf{q}_t^{j} \notin Y_i^+}.
\end{equation}\normalsize
Meaningful pairwise instance similarities must emerge to solve the cyclic walk on the graph, such that each node walks back to one of its multiple positive targets when a latent correspondence is found in $I_{t+k}$.
In \ac{mot}, a desired latent correspondence in $I_{t+k}$ to a \ac{roi} in $I_{t}$ is a \ac{roi} representing the same instance. 
We introduce a novel \textit{multi-positive contrastive loss} on the cycle probabilities to solve the quasi-dense cycle consistency problem and let latent matches emerge for all starting object nodes $\textbf{q}_t^i \in \textbf{Q}_t^+$, with {\small$\bar{A}_{t^+}^t(i,j) = p^{\mathcal{G}}_{X_t| X_t^+}(j|i)$} probability of closing in $\textbf{q}_t^j$ a cycle on $\mathcal{G}$ that starts from $\textbf{q}_t^i$:
\setlength{\abovedisplayskip}{4.0pt}
\setlength{\belowdisplayskip}{4.0pt}
\small\begin{equation}
\mathcal{L}_{\text{cycle}} = \sum_{\mathbf{q}_t^i \in \mathbf{Q}_t^+} \text{log}( 1 + \sum_{\textbf{q}_t^{l} \in Y^+_i}\sum_{\textbf{q}_t^{j} \in Y^-_i} \text{exp}(\bar{A}_{t^+}^t(i,j)-\bar{A}_{t^+}^t(i,l))).
\end{equation}\normalsize

\begin{figure*}[t]
    \centering
    \footnotesize
    \includegraphics[width=0.8\textwidth,trim={0.0cm 0.0cm 5.65cm 9.0cm},clip]{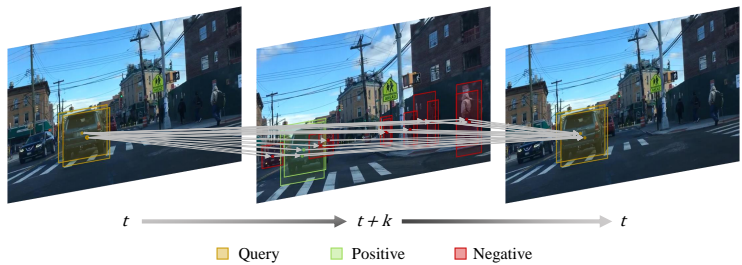}
    \caption{\textbf{Cluster-wise Forward Assignment.} Illustration of the positive (green) and negative (red) forward pseudo-labels for an input query cluster (yellow), deriving from our cluster-wise forward assignment strategy described in \cref{ssec:method_forward}.} \label{fig:method_forward}
\end{figure*}

\subsection{Enforcing Mutually-exclusive Assignments} \label{ssec:method_forward}
For a given starting node $\textbf{q}_t^i$ in $I_t$, enforcing our multi-positive cycle consistency allows the emergence of multiple latent correspondences in the reference frame $I_{t+k}$, \ie multiple nodes $\textbf{q}_{t+k}^j$ with high transition probability ${p^{\mathcal{G}}_{X_{t+k} | X_t, X_t^+}(j|Y_i^+,i)}$ on the cycle walk $\mathcal{G}_i$. However, it is not guaranteed that all such correspondences belong to the same instance.
In \ac{mot}, where the optimal graph topology must exhibit mutually-exclusive connective properties, having multiple instances in $I_{t+k}$ linked to the same instance in $I_t$ is undesirable.
To this end, we propose to (i) identify cluster-wise forward assignments on our cyclic appearance graph (\cref{fig:method_forward}), and (ii) optimize the corresponding transition probabilities to satisfy mutually-exclusive connectivity. Pseudo-code is in the Appendix.

\myparagraph{Cluster-wise Forward Assignment.}
In \cref{app:sec:proof_transition} (Appendix), we prove that the probability of transitioning on a latent node $\textbf{q}_{t+k}^j$ on the reference image $I_{t+k}$ when starting from $\textbf{q}_{t^+}^i$ in $I_t$ and ending on $\textbf{q}_{t}^l$ in $I_t$ along the cycle walk $\mathcal{G}$ is:
\setlength{\abovedisplayskip}{5.0pt}
\setlength{\belowdisplayskip}{4.0pt}
\small\begin{align} \label{eq:transition_probability}
p^{\mathcal{G}}_{X_{t+k}|X_t,X_t^+}(j|l,i)
    &= p^{\mathcal{G}}_{X_t|X_{t+k}}(l|j)p^{\mathcal{G}}{X_{t+k},X_t^+}(j|i) / C  \\
    &= A_{t^+}^{t+k}(i,j)A_{t+k}^t(j,l) / C
\end{align}\normalsize
where {\small$C = \sum_{\textbf{q}_{t+k}^m \in \textbf{q}_{t+k}} p^{\mathcal{G}}_{X_t | X_{t+k}}(l | m) p^{\mathcal{G}}_{X_{t+k} | X_t^+}(m|i)$\normalsize}.

In our quasi-dense setting (\cref{fig:method_forward}), the cluster of nodes $\mathcal{C}_t^i$  around $\textbf{q}_{t^+}^i$ in $I_t$ shares the set of multiple targets $Y_i^+=\mathcal{C}_t^i$ with cardinality $||Y_i^+||$ in $I_t$ for the cycle walk $\mathcal{G}_i$.
For a node  $\textbf{q}_{t^+}^i$ in $I_t$, we can thus refine the probability estimate of traversing a reference node by averaging over all cycles starting from $\mathcal{C}_t^i$ and ending on $Y_i^+$.
Thus, we identify the max-likelihood transition state $z_{t+k}^{i}$ on $I_{t+k}$ for a cycle walk $\mathcal{G}_i$ starting from $\textbf{q}_{t^+}^i$ in $I_t$: 
\setlength{\abovedisplayskip}{4.0pt}
\setlength{\belowdisplayskip}{4.0pt}
\small\begin{gather}\label{eq:max_likelihood_state}
    p^{\mathcal{G}}_{X_{t+k}|X_t,X_t^+}(j|Y_i^+,Y_i^+) = \sum_{i,l} \frac{A_{t^+}^{t+k}(i,j)A_{t+k}^t(j,l)}{C ||Y_i^+||} \\
    z_{t+k}^{i} = \argmax_{\textbf{q}_{t+k}^j \in \textbf{q}_{t+k}} p^{\mathcal{G}}_{X_{t+k}|X_t,X_t^+}(j| Y_i^+, Y_i^+)  
\end{gather}\normalsize
where $\textbf{q}_{t^+}^i \in  Y_i^+$ and $\textbf{q}_{t}^l \in  Y_i^+$. We identify $\mathcal{Z}_{t+k}^{i} = \mathcal{C}_{t+k}(z_{t+k}^i)$ as the cluster of \acp{roi} on $I_{t+k}$ matching to the cluster $\mathcal{C}_{t^+}(\textbf{q}_{t^+}^i)$ of \acp{roi} on $I_t$ (\cref{fig:method_forward}).

\myparagraph{Optimizing Mutually-exclusive Assignments.}
Given the set of positive nodes $\mathbf{Q}_t^+$
in $I_t$, we propose to enforce the desired mutually-exclusive connectivity property on $\mathcal{G}$ - \ie one cluster $\mathcal{Z}_{t+k}^i$ in $I_{t+k}$ is assigned to at most one $\mathcal{C}_t^i$ on $I_t$ - by incrementally assigning the clusters ${\mathcal{C}_t^i \; \forall \mathbf{q}_{t^+}^i \in \mathbf{Q}_t^+}$ to previously unassigned pseudo-matches $\mathcal{Z}_{t+k}^i$ in $I_{t+k}$, and optimizing the corresponding transition probabilities. 
In particular, (i) we sort the unique clusters $\mathcal{C}_t^i$ by their cycle closure probability {\small${p^{\mathcal{G}}_{X_t|X_t^+}(Y_i^+|\mathcal{C}_t^i) = \frac{1}{||Y^+_i||}\sum_{\mathbf{q}_{t^+}^m \in Y^+_i}\sum_{\mathbf{q}_{t}^i \in Y^+_i}{p^{\mathcal{G}}_{X_t|X_t^+}(l|m)}}$}; 
(ii) since low cycle closure probability means that a latent correspondence cannot be found, we filter out clusters with cycle closure probability less than a threshold $\beta_{\text{cycle}}$, \ie {\small${\mathcal{C}_t^{\text{valid}} = \{ \mathcal{C}_t^i \; | \; p^{\mathcal{G}}_{X_t|X_t^+}(Y_i^+|\mathcal{C}_t^i) \ge \beta_{\text{cycle}} = 0.8; \; \forall \mathbf{q}_{t^+}^i \in \mathbf{Q}_t^+\}}$};
(iii) for each valid cluster ${\mathcal{C}_t^i \in \mathcal{C}_t^{\text{valid}}}$ in $I_t$ we find a matching cluster $\mathcal{Z}_{t+k}^i \not \in \mathcal{Z}_{t+k}^{\text{assigned}}$ in $I_{t+k}$ that was not previously matched to another cluster, where $\mathcal{Z}_{t+k}^{\text{assigned}}$ is the set of already-assigned latent clusters;
(iv) we optimize the forward transition probabilities $A_{t^+}^{t+k}$ using an $L_2$ loss, whose positive targets for nodes in a cluster ${\mathcal{C}_t^l \in \mathcal{C}_t^{\text{valid}}}$ are $\mathcal{Z}_{t+k}^i$, and all other nodes are negative targets:
\setlength{\abovedisplayskip}{4.0pt}
\setlength{\belowdisplayskip}{4.0pt}
\small\begin{align}
    \mathcal{L}_{\text{forward}} &= 
    \sum_{l, i, j} (p^{\mathcal{G}}_{X_{t+k} | X_t^+}(j|i) - I[\mathbf{q}^j_{t+k} \in \mathcal{Z}_{t+k}^i])^2 = \\
    &=   \sum_{l, i, j} (A_{t^+}^{t+k}(i,j) - I[\mathbf{q}^j_{t+k} \in \mathcal{Z}_{t+k}^i])^2,
\end{align}\normalsize
where {\small$\{ l, i, j | \mathcal{C}_{t^+}^l \in \mathcal{C}_{t^+}^{\text{valid}}, \mathbf{q}_{t^+}^i \in {\mathcal{C}_{t^+}^l}, \mathbf{q}^j_{t+k} \in \mathbf{Q}_{t+k}\}$} and $I[\cdot]$ is the indicator function.
We sample three times more negative pairs than positive ones to balance the loss.

\subsection{Total Loss} \label{ssec:method_total_loss} 
We optimize the entire network under
{\small$\mathcal{L}_{\text{total}} = \mathcal{L}_{\text{det}} + \gamma_1\mathcal{L}_{\text{cycle}} + \gamma_2\mathcal{L}_{\text{forward}}$}.
{\small$\mathcal{L}_{\text{det}}$} is the loss for the chosen object detector on the key frame $I_t$, and {\small$\gamma_1=1.0$}, {\small$\gamma_2=2.0$}.

\subsection{Tracking with Walker} \label{ssec:method_walker_matching}
We here detail Walker's inference-time data association pipeline used for tracking with a \ac{toag} trained as in \cref{ssec:method_cycle,ssec:method_forward,ssec:method_total_loss}.

\myparagraph{Biwalk Similarity.} 
Inspired by the properties of our cyclic (bi-directional) walk on temporal object appearance graphs, we propose the \textit{biwalk}, a novel appearance similarity metric.
Let $N$ be the set of detected objects in frame $I_t$ with appearance embeddings
$\mathbf{n}$, and $M$ the matching candidates from the past $K$ frames with appearance embeddings $\mathbf{m}$.
We define $\mathcal{G} \! : N \! \rightarrow \!  M \!  \rightarrow \!  N$ as the cycle transition walk from the detections to the matching candidates and back to the detections. $\mathcal{G}$ is described by the cycle transition matrix $\bar{A}_N^N = A_N^M A_M^N$, with $A_N^M$ and $A_M^N$ forward and backward transition matrices respectively.
We then propose to measure the similarity between a detection $N_i$ and a matching candidate $M_j$ as the probability of traversing the corresponding node over a satisfied cycle transition $\mathcal{G}_i: N_i \rightarrow M \rightarrow N_i$. Analogously to \cref{ssec:method_forward}, the biwalk similarity can be used to determine the most-plausible match in $M$ as the max-likelihood transition state on the cyclic graph $\mathcal{G}_i$.
We thus define the biwalk similarity $s_{i,j}^{\text{biwalk}}$ between a detection $i$ and a matching candidate $j$ as:
\setlength{\abovedisplayskip}{4.0pt}
\setlength{\belowdisplayskip}{4.0pt}
\small\begin{align} \label{eq:biwalk_similarity}
s_{i,j}^{\text{biwalk}} &= p^{\mathcal{G}_i}_{M|N,N}(j|i,i) \cdot \text{I}[p^{\mathcal{G}_i}_{N|N}(i|i) \ge \beta_{\text{cycle}}] = \\
                   &= A_{N}^{M}(i,j)A_{M}^N(j,i) / C \cdot \text{I}[ \bar{A}_N^N(i,i) \ge \beta_{\text{cycle}}],
\end{align}\normalsize
where {\small$p^{\mathcal{G}_i}_{M|N,N}(j|i,i) = A_{N}^{M}(i,j)A_{M}^N(j,i) / C$} as shown in \cref{ssec:method_forward}. The higher $s_{i,j}^{\text{biwalk}}$, the stronger the similarity. 
Enforcing that the cycle transition is satisfied - \ie ${p^{\mathcal{G}_i}_{N|N}(i|i) \ge \beta_{\text{cycle}}}$ - allows to reject false positive matches.
We ablate on the superiority of our biwalk similarity over other appearance match metrics in \cref{ssec:exp_ablation}.

\myparagraph{Data Association.} Inspired by BYTE~\cite{zhang2022bytetrack}, we adopt a two-stage data association scheme.
In our first association stage, we propose to associate high-confidence detections to tracklets based on the max-likelihood transition state under motion-constrained bi-directional walks. 
We then follow the original BYTE implementation for the second association stage.
Pseudo-code in the Appendix.

We here describe in details our first association stage.
We define a novel gating function $W$ for Hungarian assignment of detections to matching candidates based on motion-constrained appearance similarity. 
In particular, we combine our appearance similarity metric \textit{biwalk} with spatial proximity between the detected objects $N$ and the matching candidates $M$ refined by Kalman filtering. 

First, we adopt the Kalman filter~\cite{kalman1960new} to predict the future location of the matching candidates.
We estimate the \textit{motion cost} via the \ac{iou} distance $d_{i,j}^{\text{IoU}}=1-IoU(M_i,N_j)$ between the i-th predicted bounding box and j-th detected one.
We estimate the \textit{appearance cost} via the biwalk distance $d_{i,j}^{\text{biwalk}} = 1 - s_{i,j}^{\text{biwalk}}$.
Similarly to~\cite{aharon2022bot}, we reject appearance-based matches for objects that are spatially far-apart - \ie $d_{i,j}^{\text{IoU}} \ge \beta_{\text{IoU}}$ - or with dissimilar appearance - \ie $d_{i,j}^{\text{biwalk}} \ge \beta_{\text{biwalk}}$ - by setting their cost to 1:
\setlength{\abovedisplayskip}{4.0pt}
\setlength{\belowdisplayskip}{4.0pt}
\small\begin{align} \label{eq:cost_matrix_walker_plus_plus}
    \hat{d}_{i,j}^{\text{biwalk}}= 
        \begin{cases}
            d_{i,j}^{\text{biwalk}}, 
            & \text{if } (d_{i,j}^{\text{IoU}} < \beta_{\text{IoU}}) \wedge (d_{i,j}^{\text{biwalk}} < \beta_{\text{biwalk}}) \\
            1,              & \text{otherwise}
        \end{cases}
\end{align}\normalsize

Finally, we fuse the appearance- $\hat{d}_{i,j}^{\text{biwalk}}$ and motion-based $d_{i,j}^{\text{IoU}}$ costs as their element-wise minimum: {\small${W_{i,j} = \min\{\lambda_{\text{biwalk}} \cdot \hat{d}_{i,j}^{\text{biwalk}}, d_{i,j}^{\text{IoU}}\}}$}, with $\lambda_{\text{biwalk}}$ relative weight of the appearance cost wrt. motion.
We use the fused cost matrix $W$ for Hungarian assignment of detections to matching candidates.

\section{Experiments}
We provide details on our evaluation protocol for self-supervised \ac{mot} methods (\cref{ssec:exp_evaluation_protocol}).
We report implementation details in \cref{ssec:exp_implementation_details}. %
We compare our method with the state of the art in \ac{mot} on sparsely (\cref{ssec:exp_sota_sparse}) and densely (\cref{ssec:exp_sota_dense}) annotated videos. Finally, we conduct ablation studies in \cref{ssec:exp_ablation}.

\subsection{Evaluation Protocol} \label{ssec:exp_evaluation_protocol}
We aim to evaluate the effectiveness of self-supervised \ac{mot} methods for learning appearance and their sensitivity to different annotation sparsity levels.

\myparagraph{Datasets.} 
\textit{MOT17}~\cite{dendorfer2021motchallenge} is one of the most popular pedestrian tracking datasets, annotated at 14 $\sim$ 30 FPS and featuring 7 training and 7 test sequences in crowded street scenes. 
\textit{DanceTrack}~\cite{sun2022dancetrack} is a challenging tracking dataset for pedestrians in uniform appearance and diverse motion. Annotated at 20 FPS, it includes 40 videos for training, 25 for validation, and 35 for testing. Its appearance uniformity provides a challenging setting for appearance-based trackers, and even more for self-supervised ones.
\textit{BDD100K}~\cite{yu2020bdd100k} is a driving dataset annotated at 5 FPS, counting 1400
sequences for training, 200 for validation, and 400 for testing. Featuring 8 classes, it allows to validate \ac{mot} methods in a multi-class setting.
We report the most popular metrics for each dataset.

\myparagraph{Annotation Sparsity.}
We evaluate self-supervised \ac{mot} under two detection annotation settings during training, \ie dense and sparse. 
Tracking labels are never provided.
In the \textit{sparse} setting, detection annotations are provided for only one every $k$ frames. This is the most practical setting, as it is undesirable to annotate all frames in a video. 
We thus compare self-supervised trackers trained with detection annotations at 0.1 FPS, a value sensitively below the minimal annotation rate in tracking datasets (1 FPS~\cite{dave2020tao}) and sparser than the average object living time in a video.
In the \textit{dense} setting, detection annotations are provided for all frames to compare self-supervised to supervised \ac{mot}.

\myparagraph{Self-supervised Baselines.} 
We evaluate all models using the \yolox detector, a 4conv-1fc \textit{embedding head}, and QDTrack's~\cite{fischer2022qdtrack} appearance-only data association scheme.
First, we compare across all settings to QDTrack-S~\cite{fischer2022qdtrack}, which uses data augmentation for image-level self-supervision.
Then, we ablate against the self-supervised Re-ID literature (\cref{tab:ablation_ss_reid}) by extending MvMHAT~\cite{gan2021self} and ReMOTS~\cite{yang2020remots} to the joint detection and tracking setting. Moreover,  
Moreover, we apply the original contrastive random walk for pixel correspondences~\cite{jabri2020space} on our quasi-dense \ac{toag} defined in \cref{ssec:method_appearance_graph}. We refer to it as QD-CRW.
Finally, we introduce an appearance-only variant of Walker that follows QDTrack's data association scheme, namely QD-Walker. 
Details in the Appendix.

\begin{table}[t]
\centering
\setlength{\tabcolsep}{3.5pt}
\scriptsize
\caption{\textbf{State of the art on DanceTrack.} We compare existing methods on DanceTrack's test set under sparse (0.1 FPS) and dense (20 FPS) annotations. Methods in black use self-supervised appearance.}
\label{tab:dancetrack-sota}
\begin{tabular}{@{}p{0.5cm}clccccc@{}}
\toprule
                                                                                                       & Self. Sup.  & Method                              & HOTA & AssA & DetA & MOTA & IDF1 \\ \midrule
\multirow{3}{*}{\raisebox{+0\normalbaselineskip}[0pt][0pt]{\rotatebox[origin=c]{90}{\textbf{Sparse}}}} 
                                                                                                       &     \multirow{3}{*}{\cmark}  &     QDTrack-S~\cite{fischer2022qdtrack} &    29.2 &    12.3 &    70.2 &    79.3 &    22.6 \\
                                                                                                       &                              &    QD-Walker (ours)                       &      41.0 &      23.2 &    \textbf{72.6} &      85.8 &    39.9 \\
                                                                                                       &                              &     Walker (ours)                     &    \textbf{45.9} &    \textbf{29.5} &      71.9 &    \textbf{86.2} &    \textbf{49.0} \\ \midrule
\multirow{10}{*}{\raisebox{+0\normalbaselineskip}[0pt][0pt]{\rotatebox[origin=c]{90}{\textbf{Dense}}}} & \gr \multirow{7}{*}{\xmark}  & \gr FairMOT~\cite{zhang2021fairmot}     & \gr 39.7 & \gr 23.8 & \gr 66.7 & \gr 82.2 & \gr 40.8 \\
                                                                                                       &                              & \gr CenterTrack~\cite{zhou2020tracking} & \gr 41.8 & \gr 22.6 & \gr   78.1 & \gr 86.8 & \gr 35.7 \\
                                                                                                       &                              & \gr TransTrack~\cite{sun2020transtrack} & \gr 45.5 & \gr 27.5 & \gr 75.9 & \gr 88.4 & \gr 45.2 \\
                                                                                                       &                              & \gr ByteTrack~\cite{zhang2022bytetrack} & \gr 47.7 & \gr 32.1 & \gr 71.0 & \gr   89.6 &  \gr  53.9 \\
                                                                                                       &                              & \gr QDTrack~\cite{fischer2022qdtrack}   &  \gr  54.2 & \gr   36.8 & \gr 80.1 & \gr 87.7 & \gr 50.4 \\
                                                                                                       &                              & \gr MOTR~\cite{zeng2022motr}            &  \gr  54.2 & \gr 40.2 & \gr 73.5 & \gr 79.7 & \gr 51.5 \\
                                                                                                       &                              & \gr OC-SORT~\cite{cao2022observation}   &  \gr 55.1 & \gr   38.3 &  \gr 80.3 & \gr 92.0 & \gr   54.6 \\ \cmidrule{2-8}
                                                                                                       &     \multirow{3}{*}{\cmark}  &   QDTrack-S                       &   38.3 &   19.8 &   77.2 &   85.4 &   33.6 \\
                                                                                                       &                              &   QD-Walker (ours)                       &   49.8 &   32.2 &   \textbf{77.3} &   89.4 &   49.3 \\
                                                                                                       &                              &    Walker (ours)                     &   \textbf{52.4} &   \textbf{36.1} &   76.5 &     \textbf{89.7} &     \textbf{55.7} \\ \bottomrule
\end{tabular}
\end{table}

\subsection{Implementation Details} \label{ssec:exp_implementation_details}
In the \textit{sparse} setting, we select positive nodes for our appearance graph (\cref{ssec:method_appearance_graph}) by their IoU with high-confidence detections, and with the available ground-truth boxes in the \textit{dense} setting.
We train Walker using a batch size of 16 and an initial learning rate of 0.00025, decayed with a cosine schedule after a one-epoch warm-up. We initialize the detector from a COCO pre-trained model. We train on 8 GPUs NVIDIA RTX 3090.
On MOT17, we follow the private detector half-train/half-val protocol, training for 50 epochs on the union of CrowdHuman~\cite{shao2018crowdhuman} and MOT17~\cite{fischer2022qdtrack,zhang2022bytetrack,cao2022observation}.
On DanceTrack and BDD100K, we train for 12 and 25 epochs.
On MOT17, we apply offline tracklet interpolation~\cite{zhang2022bytetrack,fischer2022qdtrack,aharon2022bot}.

\subsection{Sparse Annotations - Comparison with the State of the Art} \label{ssec:exp_sota_sparse}
The sparse setting is the most relevant for assessing self-supervised \ac{mot} (\cref{ssec:exp_evaluation_protocol}).
We here consider a 0.1 FPS annotation rate and ablate on the effect of different annotation sparsity rates on self-supervised trackers in \cref{ssec:exp_ablation}.

\myparagraph{Dancetrack.} 
DanceTrack challenges appearance-based trackers by featuring dancing people with uniform appearance. 
While previous work~\cite{zeng2022motr,fischer2022qdtrack} shows that supervised methods can rely on fine details to learn meaningful appearance, the same has never been shown for self-supervised ones.
Our experiments (\cref{tab:dancetrack-sota}, \textbf{Sparse}) show that Walker and QD-Walker significantly outperform QDTrack-S by +16.7 HOTA~\cite{luiten2021hota} and with more than twice the \ac{assa} (29.5 vs. 12.3).
We argue that Walker's remarkable improvement over QDTrack-S is due to its access to the unlabeled video stream during self-supervision, which allows Walker to learn how to match under the rapid pose changes across DanceTrack's neighboring frames. Since QDTrack-S is only exposed to individual frames during training, it cannot deal with rapid pose changes.

\myparagraph{BDD100K.} Similar observations hold for 
BDD100K (\cref{tab:bdd-sota}, \textbf{Sparse}). Walker learns more discriminative multi-class appearance descriptors than QDTrack-S.

\begin{table}[t]
\centering
\setlength{\tabcolsep}{3.5pt}
\scriptsize
\caption{\textbf{State of the art on BDD100K.} We compare with existing methods on the BDD100K test set under sparse (0.1 FPS) and dense (5 FPS) annotations. Methods in black use self-supervised appearance.}
\label{tab:bdd-sota}
\begin{tabular}{@{}p{0.5cm}clcccc@{}}
\toprule
                                                                                                       & Self. Sup.  & Method                              & mMOTA & mIDF1 & MOTA & IDF1 \\ \midrule
\multirow{3}{*}{\raisebox{+0\normalbaselineskip}[0pt][0pt]{\rotatebox[origin=c]{90}{\textbf{Sparse}}}} 
                                                                                                       &     \multirow{3}{*}{\cmark}  &   QDTrack-S ~\cite{fischer2022qdtrack} &   37.1 &   49.7  &   63.5 &    64.0  \\
                                                                                                       &                              &   QD-Walker  (ours)                       &     37.8 &     52.3  &     64.7 &     67.2   \\
                                                                                                       &                              &   Walker   (ours)                     &     \textbf{39.0} &     \textbf{54.1}  &     \textbf{68.2} &     \textbf{70.1}    \\ \midrule
\multirow{8}{*}{\raisebox{+0\normalbaselineskip}[0pt][0pt]{\rotatebox[origin=c]{90}{\textbf{Dense}}}}  & \gr \multirow{5}{*}{\xmark}  & \gr Yu \etal~\cite{yu2020bdd100k}       & \gr 26.3  & \gr 44.7  & \gr 58.3 & \gr 68.2 \\
                                                                                                       &                              & \gr DeepSORT~\cite{wojke2017simple}     & \gr 31.6  & \gr 38.7  & \gr 56.9 & \gr 56.0 \\
                                                                                                       &                              & \gr TETer~\cite{li2022tracking}         & \gr 37.4  & \gr 53.3  & \gr -    & \gr -    \\
                                                                                                       &                              & \gr ByteTrack~\cite{zhang2022bytetrack} & \gr  40.1  &  \gr  55.8  &  \gr  69.9 & \gr   71.3 \\
                                                                                                       &                              & \gr QDTrack~\cite{fischer2022qdtrack}   & \gr  42.4  &  \gr  55.6  &  \gr  68.4 & \gr   73.9 \\ \cmidrule{2-7}
                                                                                                       &     \multirow{3}{*}{\cmark}  &   QDTrack-S ~\cite{fischer2022qdtrack}     &   38.7 &   50.3      &   65.2 &     66.8     \\
                                                                                                       &                              &   QD-Walker  (ours)           &   39.6 &   53.4      &   65.9 &     69.7     \\
                                                                                                       &                              &   Walker   (ours)         &    \textbf{41.2} &     \textbf{56.1}      &     \textbf{68.3} &     \textbf{72.1}    \\ \bottomrule
\end{tabular}
\end{table}

\subsection{Dense Annotations - Comparison with the State of the Art} \label{ssec:exp_sota_dense}
Although Walker learns appearance representations in a self-supervised way, we show that it impressively reports competitive performance with the supervised state of the art on MOT17~\cite{dendorfer2021motchallenge}, DanceTrack~\cite{sun2022dancetrack}, and BDD100K~\cite{yu2020bdd100k}. Walker's training follows the dense protocol (\cref{ssec:exp_evaluation_protocol}). 

\myparagraph{Dancetrack.} 
(\cref{tab:dancetrack-sota}, \textbf{Dense}) shows that our self-supervised appearance-only Walker outperforms several popular trackers, including ByteTrack. Its high-quality appearance representations make Walker competitive with other supervised methods such as QDTrack~\cite{fischer2022qdtrack} and MOTR~\cite{zeng2022motr}, even achieving the highest IDF1 across all methods.

\myparagraph{BDD100K.}
On the multi-class dataset BDD100K (\cref{tab:bdd-sota}, \textbf{Dense}), Walker outperforms the supervised appearance-based TETer~\cite{li2022tracking} and improves over ByteTrack~\cite{zhang2022bytetrack}, demonstrating the importance of appearance descriptors in tracking.

\myparagraph{MOT17.}
The relatively linear motion of pedestrians in MOT17 (\cref{tab:mot-sota}) makes the benchmark particularly suitable for motion-based trackers. Nevertheless, our self-supervised appearance-only baseline QD-Walker approaches supervised appearance-only trackers such as QDTrack and MOTR, and the full Walker further improves it and reports competitive performance.

\begin{table}[t]
\centering
\setlength{\tabcolsep}{3.5pt}
\scriptsize
\caption{\textbf{State of the art on MOT17.} We compare  methods with private detectors on MOT17's test set under dense annotations (14 $\sim$ 30 FPS). Methods in black use self-supervised appearance.}
\label{tab:mot-sota}
\begin{tabular}{@{}p{0.5cm}clccccc@{}}
\toprule
 & Self. Sup.    & Method                           & HOTA & AssA & DetA & MOTA & IDF1 \\ \midrule
\multirow{12}{*}{\raisebox{+0\normalbaselineskip}[0pt][0pt]{\rotatebox[origin=c]{90}{\textbf{Dense}}}} & \gr \multirow{9}{*}{\xmark}  & \gr CenterTrack~\cite{zhou2020tracking} & \gr 52.2 & \gr 51.0 & \gr 53.8 & \gr 67.8 & \gr 64.7 \\
   &      & \gr FairMOT ~\cite{zhang2021fairmot}  & \gr 59.3 & \gr 58.0 & \gr 60.9 & \gr 73.7 & \gr 72.3 \\
 &      & \gr TransTrack~\cite{sun2020transtrack}   & \gr 54.1 & \gr 47.9 & \gr 61.6 & \gr 63.9 & \gr 74.5 \\
 &      & \gr ByteTrack~\cite{zhang2022bytetrack}   & \gr 63.1 & \gr 62.0 &  \gr  64.5 &\gr  77.3 & \gr   80.3 \\
 &      & \gr QDTrack~\cite{fischer2022qdtrack}    &\gr  63.5 & \gr 62.6 &  \gr  64.5 & \gr   78.7 & \gr 77.5 \\
 &      & \gr MOTR~\cite{zeng2022motr}    & \gr 57.8 & \gr 55.7 & \gr 60.3 & \gr 68.6 & \gr 73.4 \\
 &      & \gr OC-SORT~\cite{cao2022observation}   & \gr 63.2 &  \gr  63.2 & \gr -    & \gr 77.5 &\gr  78.0 \\
 &      &\gr  StrongSORT++~\cite{du2023strongsort}  &  \gr  64.4 & \gr  64.4 &  \gr 64.6 & \gr  79.5 & \gr   79.6 \\
 &      & \gr BoT-SORT~\cite{aharon2022bot}    &  \gr  64.6 & \gr -    & \gr -    & \gr 79.5 & \gr 80.6 \\\cmidrule{2-8}
   &\multirow{3}{*}{\cmark}  & QDTrack-S~\cite{fischer2022qdtrack}  &  58.9 &  59.2 &  62.6 &  74.4 &  74.0 \\
   &      & QD-Walker  (ours)     &  61.7 &  60.6 &  63.1 &  75.4 &  74.2 \\
   &      & Walker   (ours)        &    \textbf{63.6} &    \textbf{63.0} &  \textbf{64.0} &    \textbf{78.2} &  \textbf{77.4} \\ \bottomrule
\end{tabular}
\end{table}

\noindent\begin{minipage}{0.48\textwidth}

\centering
\setlength{\tabcolsep}{2.5pt}
\scriptsize
\captionof{table}{Comparison to self-supervised Re-ID ($\dagger$) and self-supervised correspondence ($\ddagger$) approaches on DanceTrack val. For a fair comparison, all baselines share the same architecture and inference algorithm as our appearance-only QD-Walker.
}
\label{tab:ablation_ss_reid}
\begin{tabular}{@{}llccc@{}}
\\
\\
\toprule
                                                                                                       & Method                & HOTA          & AssA          & DetA  \\ \midrule
\multirow{5}{*}{\raisebox{+0.\normalbaselineskip}[0pt][0pt]{\rotatebox[origin=c]{90}{\textbf{Sparse}}}} & QD-CRW$^{\ddagger}$~\cite{jabri2020space}    & 18.4          & 4.8           &  \textbf{72.7}   \\
                                                                                                       & MvMHAT$^{\dagger}$~\cite{gan2021self}    &   40.7          &   23.4          & 71.6          \\
                                                                                                       & ReMOTS$^{\dagger}$~\cite{yang2020remots}    &   41.0          &   23.5          &   71.8             \\
                                                                                                       & QD-Walker (Ours)         &  42.2 &  24.7 &   71.7       \\
                                                                                                       & Walker (Ours)         &  \textbf{47.6}    &  \textbf{31.0} &  71.5\\ \midrule
\multirow{5}{*}{\raisebox{+0.\normalbaselineskip}[0pt][0pt]{\rotatebox[origin=c]{90}{\textbf{Dense}}}}  & QD-CRW$^{\ddagger}$~\cite{jabri2020space}    & 19.2          & 5.1           & 74.1         \\
                                                                                                       & MvMHAT$^{\dagger}$~\cite{gan2021self}    &   44.6          &   26.9          &  \textbf{75.0}   \\
                                                                                                       & ReMOTS$^{\dagger}$~\cite{yang2020remots}    &   45.2          &   27.5          &   74.8         \\
                                                                                                       & QD-Walker (Ours)         &  49.0 &  32.8 &   73.6     \\
                                                                                                       & Walker (Ours)         &  \textbf{53.0} &  \textbf{38.6} &  73.1       \\ \bottomrule
\\
\\
\\
\vspace{-0.3em}  %
\end{tabular}

\end{minipage}
\quad
\begin{minipage}{0.48\textwidth}


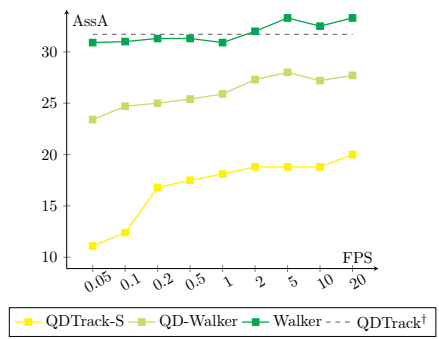
\captionof{figure}{Self-supervised \ac{mot} under different annotation sparsity rates (FPS) during training. We compare video-level (Walker; QD-Walker) and frame-level (QDTrack-S) self-supervision. $\dagger$: reference QDTrack fully-supervised at 20 FPS.} \label{fig:annotation_sparsity_assa}

\begin{filecontents}{dancetrack_assa.dat}
X FPS          QDTrack-S  QD-Walker  Walker  WalkerDense  QDTrack	
1 0.05         11.1       23.4       30.9    32.8         31.7
2 0.1          12.4       24.7       31.0    32.8         31.7
3 0.2          16.8       25.0       31.3        32.8         31.7
4 0.5          17.5       25.4       31.3        32.8         31.7
5 1            18.1       25.9       30.9        32.8         31.7
6 2            18.8       27.3       32.0        32.8         31.7
7 5            18.8       28.0       33.3        32.8         31.7
8 10           18.8       27.2       32.5        32.8         31.7
9 20           20.0       27.7       33.3    32.8         31.7
\end{filecontents}

\centering
    \resizebox{\textwidth}{!}{
\begin{tikzpicture}[scale=0.8]
\begin{axis}[
axis lines=middle,
ymin=11,
ymax=32,
legend style ={at={(0.5,-0.15)}, anchor=north, legend columns=-1},
xlabel=FPS,
ylabel=AssA,
xticklabel style = {rotate=30,anchor=north},
enlargelimits = true,
xticklabels from table={dancetrack_assa.dat}{FPS},xtick=data]
\addplot[colorTrd,thick,mark=square*] table [y=QDTrack-S,x=X]{dancetrack_assa.dat};
\addlegendentry{QDTrack-S}
\addplot[colorSnd,thick,mark=square*] table [y=QD-Walker,x=X]{dancetrack_assa.dat};
\addlegendentry{QD-Walker}
\addplot[colorFst,thick,mark=square*] table [y=Walker,x=X]{dancetrack_assa.dat};
\addlegendentry{Walker}]
\addplot[gray,dashed,thick] table [y=QDTrack,x=X]{dancetrack_assa.dat};
\addlegendentry{QDTrack$^\dagger$}]
\end{axis}
\end{tikzpicture}
}
 
\end{minipage}

\subsection{Ablation Study} \label{ssec:exp_ablation}
\myparagraph{Annotations Sparsity.}
We argue that a good self-supervised \ac{mot} method must fully utilize the available unlabeled data to learn meaningful appearance representations. Thus, we compare in \cref{fig:annotation_sparsity_assa} the sensitivity to different annotation sparsity levels during training for representative self-supervised \ac{mot} methods.
We compare: QDTrack-S~\cite{fischer2022qdtrack}, which relies on image-level self-supervision by augmenting static images; QD-Walker, which shares the same architecture and appearance-only tracking algorithm with QDTrack-S but utilizes video-level self-supervision; Walker, which further combines motion cues to appearance ones.
All methods use YOLOX as detector.
We assess their \ac{assa} at different annotation frame rates - varying from 0.05 to 20 FPS - on the DanceTrack validation set.
We find that our video-level self-supervision is considerably more robust to annotation sparsity, and it can outperform image-level self-supervision even when reducing the number of annotated frames by 400x. 
Moreover, complementing appearance with motion, Walker's performance remains remarkably stable at any annotation frame rate, outperforming the fully supervised QDTrack despite not using tracking labels and even with up to 10x less annotated frames.

\myparagraph{Self-supervised Re-ID.}
As motivated in \cref{ssec:exp_evaluation_protocol}, we extend baselines from the self-supervised Re-ID~\cite{gan2021self,yang2020remots} and correspondence learning~\cite{jabri2020space} literature to the joint detection and tracking problem. 
For a fair comparison, all methods share the same architecture and inference algorithm as the appearance-only QD-Walker. Walker additionally uses motion to reject unlikely appearance-based associations. 
Compared to all other baselines, both QD-Walker and Walker show stark superiority in association accuracy, proving the superiority of our self-supervised appearance-learning scheme. Moreover, the comparison to QD-CRW indicates that the original single-positive contrastive random walk is suboptimal on quasi-dense \acp{toag}. We argue that: (i) a single positive formulation introduces several false negatives in the optimized loss; (ii) by not enforcing mutual exclusivity its assignments are ambiguous for MOT, where one detection must be assigned to at most one tracklet. This further validates the importance of our contributions towards learning an optimal \acp{toag} topology for \ac{mot}.

\begin{figure*}[t]
\centering
\scriptsize
\setlength{\tabcolsep}{1pt}
\begin{tabular}{cccccc}
 & $t=\hat{t}-2k$ & $t=\hat{t}-k$  & $t=\hat{t}$  & $t=\hat{t}+k$  & $t=\hat{t}+2k$ \\
\raisebox{+2\normalbaselineskip}[0pt][0pt]{\rotatebox[origin=c]{90}{QDTrack-S}} & \includegraphics[width=0.19\textwidth,trim={6.0cm 4.0cm 10.0cm 4.0cm},clip]{figures/supplement/vis/dancetrack/0058/errors/static_qdtrack/0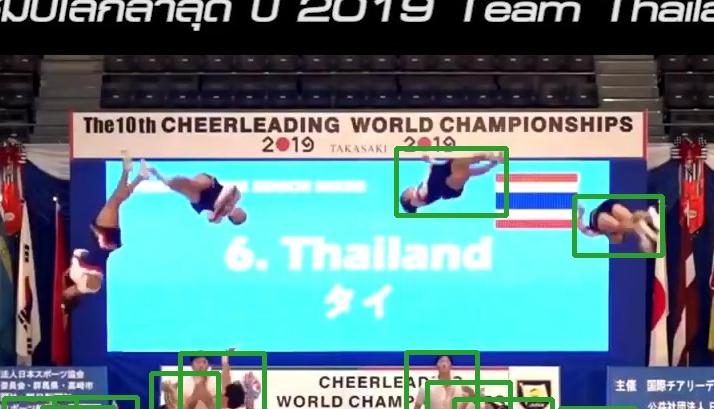} & \includegraphics[width=0.19\textwidth,trim={6.0cm 4.0cm 10.0cm 4.0cm},clip]{figures/supplement/vis/dancetrack/0058/errors/static_qdtrack/000124.jpg} & \includegraphics[width=0.19\textwidth,trim={6.0cm 4.0cm 10.0cm 4.0cm},clip]{figures/supplement/vis/dancetrack/0058/errors/static_qdtrack/000128.jpg} & \includegraphics[width=0.19\textwidth,trim={6.0cm 4.0cm 10.0cm 4.0cm},clip]{figures/supplement/vis/dancetrack/0058/errors/static_qdtrack/000132.jpg} & \includegraphics[width=0.19\textwidth,trim={6.0cm 4.0cm 10.0cm 4.0cm},clip]{figures/supplement/vis/dancetrack/0058/errors/static_qdtrack/000136.jpg} \\
\raisebox{+2\normalbaselineskip}[0pt][0pt]{\rotatebox[origin=c]{90}{Walker}}    & \includegraphics[width=0.19\textwidth,trim={6.0cm 4.0cm 10.0cm 4.0cm},clip]{figures/supplement/vis/dancetrack/0058/errors/walker_plus_plus/000120.jpg}  & \includegraphics[width=0.19\textwidth,trim={6.0cm 4.0cm 10.0cm 4.0cm},clip]{figures/supplement/vis/dancetrack/0058/errors/walker_plus_plus/000124.jpg} & \includegraphics[width=0.19\textwidth,trim={6.0cm 4.0cm 10.0cm 4.0cm},clip]{figures/supplement/vis/dancetrack/0058/errors/walker_plus_plus/000128.jpg} & \includegraphics[width=0.19\textwidth,trim={6.0cm 4.0cm 10.0cm 4.0cm},clip]{figures/supplement/vis/dancetrack/0058/errors/walker_plus_plus/000132.jpg} & \includegraphics[width=0.19\textwidth,trim={6.0cm 4.0cm 10.0cm 4.0cm},clip]{figures/supplement/vis/dancetrack/0058/errors/walker_plus_plus/000136.jpg}
\end{tabular}
\vspace{-1em}
  \caption{We analyze 5 frames spaced by 0.2 seconds of the DanceTrack sequence \textit{0058}. Compared to image-level self-sup. (QDTrack-S~\cite{fischer2022qdtrack}), Walker effectively utilizes the temporal information to reduce ID switches (blue). Correctly tracked boxes in green.}  
  \label{fig:vis_main_dancetrack_errors_0058}
\end{figure*}

\myparagraph{Qualitative Results}. We analyze 5 frames spaced by 0.2 seconds of the DanceTrack sequence 0058 (\cref{fig:vis_main_dancetrack_errors_0058}). Walker eliminates the ID switches caused by occlusions and rapid pose changes, further validating that - unlike QDTrack-S - Walker can effectively learn to disambiguate non-rigid objects under rapidly varying poses by learning from the temporal stream.

\section{Conclusion}
This paper introduces Walker, the first self-supervised multiple object tracker that learns from sparse detection annotations and no instance IDs. 
Walker self-supervises appearance representations by optimizing the topology of a cleverly-designed temporal object appearance graph (\cref{ssec:method_appearance_graph}). We let meaningful instance similarities (edges) emerge by optimizing our multi-positive contrastive random walks (\cref{ssec:method_cycle}), and enforce the mutually-exclusive graph connectivity necessary to downstream association (\cref{ssec:method_forward}).
By relying on video-level self-supervision, Walker effectively makes use of the unlabeled frames in sparsely annotated datasets.
As a result, Walker significantly outperforms previous state-of-the-art self-supervised trackers~\cite{fischer2022qdtrack} even when trained with 400x less annotated frames.
Remarkably, Walker is the first self-supervised tracker to achieve competitive performance with state-of-the-art supervised trackers on a variety of benchmarks.
We hope that our work will inspire future research in downstream tracking applications dealing with limited labels, \eg open-world and open-vocabulary tracking~\cite{yang2023track,li2023ovtrack}, domain adaptation~\cite{segu2023darth}, continual learning~\cite{liu2023cooler}.
Finally, by replacing the commonly-used frame-level self-supervision with our video-level self-supervision, we believe that our contributions will enable training stronger foundational models for multiple object tracking~\cite{li2024matching}.

\clearpage
\section*{Acknowledgements}
This work was supported in part by the Max Plank ETH Center for Learning Systems.

\bibliographystyle{splncs04}
\bibliography{main}

\clearpage
\section*{Appendix}
\def\thesection{\Alph{section}}
\setcounter{section}{0}
We here provide additional details on the method, mathematical proofs, implementation details and experimental results.
In \cref{app:sec:proof_transition} we provide a mathematical derivation of the latent transition probability on a random walk.
Refer to \cref{app:ssec:walker_training} for details on Walker's training.
Refer to \cref{app:ssec:walker_inference} and \cref{app:ssec:qd_walker_inference} for an in-depth explanation of the inference tracking algorithms of Walker and QD-Walker respectively.
\cref{app:sec:implementation_details} reports implementation details.
In \cref{app:sec:datasets}, we motivate the use of a sparse setting and stress the usefulness of self-supervised trackers leveraging the temporal information to make full use of such label-efficient settings.
Finally, we report additional quantitative and qualitative (\cref{app:ssec:exp_add_results}) results.

\section{Latent Transition Probability Derivation} \label{app:sec:proof_transition}
We prove \cref{eq:transition_probability} (\cref{ssec:method_forward}), which represents the probability of transitioning on a given latent node $\textbf{q}_{t+k}^j$ given that a walk on the appearance graph $\mathcal{G}$ (\cref{ssec:method_appearance_graph}) starts from $\textbf{q}_{t^+}^i$ and ends in $\textbf{q}_{t}^l$.

Let $\mathcal{G}: \textbf{Q}_t^+ \rightarrow \textbf{Q}_{t+k} \rightarrow \textbf{Q}_t$ be a cyclic temporal graph connecting the nodes $\textbf{Q}_t^+$ in the frame $I_{t}$ to $\textbf{Q}_{t+k}$ in the frame $I_{t+k}$ and back to $\textbf{Q}_t$ in $I_{t}$.
$\mathcal{G}$ is a Markov chain described by the forward and backward transitions $A_{t^+}^{t+k}$ and ${A}_{t+k}^{t}$, whose chained transition $\bar{A}_{t^+}^{t}$ describes the cycle correspondence as a multi-step walk along the appearance graph $\mathcal{G}$. 
Let $X_t$ be the state of a walker at time $t$, and $p_{X_t}(i)$ the probability of being at node $i$ at time $t$.
\begin{theorem} \label{app:lemma:transition_probability}
The probability of transitioning on a latent node $\textbf{q}_{t+k}^j$ on the reference image $I_{t+k}$ when starting from $\textbf{q}_{t^+}^i$ in $I_t$ and ending on $\textbf{q}_{t}^l$ in $I_t$ along the cycle walk $\mathcal{G}$ is:
\setlength{\abovedisplayskip}{4.0pt}
\setlength{\belowdisplayskip}{4.0pt}
\fontsize{8.5}{5}\begin{align}
p^{\mathcal{G}}_{X_{t+k}|X_t,X_t^+}(j|l,i)
    &= p^{\mathcal{G}}_{X_t|X_{t+k}}(l|j)p^{\mathcal{G}}{X_{t+k}|X_t^+}(j|i) / C  \\
    &= A_{t^+}^{t+k}(i,j)A_{t+k}^t(j,l) / C
\end{align}\normalsize
where {\small$C = \sum_{\textbf{q}_{t+k}^m \in \textbf{Q}_{t+k}} p^{\mathcal{G}}_{X_t| X_{t+k}}(l|m) p^{\mathcal{G}}_{X_{t+k} | X_t^+}(m|i)$} is a normalizing constant.
\end{theorem}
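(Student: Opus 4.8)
The plan is to derive the formula directly from the definition of conditional probability together with the Markov property of the walk $\mathcal{G}$. First I would rewrite the target quantity as a ratio of a joint and a marginal,
\begin{equation}
p^{\mathcal{G}}_{X_{t+k}|X_t,X_t^+}(j|l,i) = \frac{p^{\mathcal{G}}_{X_{t+k},X_t|X_t^+}(j,l|i)}{p^{\mathcal{G}}_{X_t|X_t^+}(l|i)},
\end{equation}
so that the task reduces to computing the joint numerator and the marginal denominator separately.

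For the numerator I would apply the chain rule and then invoke the Markov property of $\mathcal{G}$ along the walk $X_t^+ \to X_{t+k} \to X_t$ established in \cref{ssec:method_appearance_graph}: conditioning on the intermediate state $X_{t+k}=j$ renders the terminal state $X_t$ independent of the initial state $X_t^+$. This gives
\begin{align}
p^{\mathcal{G}}_{X_{t+k},X_t|X_t^+}(j,l|i)
   &= p^{\mathcal{G}}_{X_t|X_{t+k},X_t^+}(l|j,i)\,p^{\mathcal{G}}_{X_{t+k}|X_t^+}(j|i) \\
   &= p^{\mathcal{G}}_{X_t|X_{t+k}}(l|j)\,p^{\mathcal{G}}_{X_{t+k}|X_t^+}(j|i),
\end{align}
which by the definitions of the forward and backward transition (affinity) matrices equals $A_{t^+}^{t+k}(i,j)\,A_{t+k}^t(j,l)$.

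For the denominator I would marginalize the same joint over the intermediate state by the law of total probability, using the identical Markov factorization:
\begin{equation}
p^{\mathcal{G}}_{X_t|X_t^+}(l|i) = \sum_{\textbf{q}_{t+k}^m \in \textbf{Q}_{t+k}} p^{\mathcal{G}}_{X_t|X_{t+k}}(l|m)\,p^{\mathcal{G}}_{X_{t+k}|X_t^+}(m|i) = C,
\end{equation}
which is exactly the stated normalizing constant. Substituting both expressions into the first display yields the claim. The only delicate step — and the one I would be most careful to justify — is the Markov-property cancellation $p^{\mathcal{G}}_{X_t|X_{t+k},X_t^+}=p^{\mathcal{G}}_{X_t|X_{t+k}}$; this holds because $\mathcal{G}$ is constructed in \cref{ssec:method_appearance_graph} precisely as a spatio-temporal Markov chain whose one-step transition probabilities are the affinity matrices, so no further structural assumption on the embeddings is needed.
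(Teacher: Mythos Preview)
Your proposal is correct and follows essentially the same route as the paper: write the target as a ratio via the definition of conditional probability, factor the numerator using the first-order Markov property of the walk $X_t^+\to X_{t+k}\to X_t$, and obtain the denominator $C$ by marginalizing the same factorization over the intermediate state. The only cosmetic difference is that you condition on $X_t^+=i$ from the outset and work with $p^{\mathcal{G}}_{X_{t+k},X_t|X_t^+}(j,l|i)/p^{\mathcal{G}}_{X_t|X_t^+}(l|i)$, whereas the paper starts from the full joint $p^{\mathcal{G}}_{X_t,X_{t+k},X_t^+}(l,j,i)/p^{\mathcal{G}}_{X_t,X_t^+}(l,i)$ and carries an extra $p^{\mathcal{G}}_{X_t^+}(i)$ factor that cancels; your presentation is slightly cleaner for that reason.
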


\begin{proof}[Proof of \cref{app:lemma:transition_probability}]
\setlength{\abovedisplayskip}{4.0pt}
\setlength{\belowdisplayskip}{4.0pt}
\fontsize{8.5}{5}\begin{align*} 
p^{\mathcal{G}}_{X_{t+k}|X_t,X_t^+}(j|l,i) &\stackrel{(1)}{=} \frac{p^{\mathcal{G}}_{X_t,X_{t+k},X_t^+}(l,j,i)}{p^{\mathcal{G}}_{X_t,X_t^+}(l,i)} \\
    &\stackrel{(2)}{=} \frac{p^{\mathcal{G}}_{X_t|X_{t+k}}(l|j)p^{\mathcal{G}}_{X_{t+k}|X_t^+}(j|i)}{p^{\mathcal{G}}{X_t|X_t^+}(l|i)p^{\mathcal{G}}{X_t^+}(i)} \\
    &\stackrel{(3)}{=} \frac{p^{\mathcal{G}}_{X_t|X_{t+k}}(l|j)p^{\mathcal{G}}_{X_{t+k}|X_t^+}(j|i)}{\sum_{\textbf{q}_{t+k}^m \in \textbf{Q}_{t+k}}p^{\mathcal{G}}_{X_t|X_{t+k}}(l|m)p^{\mathcal{G}}_{X_{t+k}|X_t}(m|l)p^{\mathcal{G}}_{X_t^+}(i)} \\
    &\stackrel{(4)}{=} p^{\mathcal{G}}_{X_t|X_{t+k}}(l|j)p^{\mathcal{G}}_{X_{t+k}|X_t^+}(j|i) \; / \;  C \\
    &\stackrel{(5)}{=} A_{t^+}^{t+k}(i,j)A_{t+k}^t(j,l) \;  / \;  C
\end{align*}\normalsize

\noindent We here motivate the steps in the proof:

(1) by the definition of conditional probability.

(2) since a walk on the appearance graph $\mathcal{G}$ defined in \cref{ssec:method_appearance_graph} is a first-order Markov chain, each transition only depends on the previous state, \ie{}  {\small${p^{\mathcal{G}}_{X_t,X_{t+k},X_t^+}(l,j,i) = p^{\mathcal{G}}_{X_t|X_{t+k}}(l|j)p^{\mathcal{G}}_{X_{t+k}|X_t^+}(j|i)}$}.

(3) since a walk on the appearance graph $\mathcal{G}$ defined in \cref{ssec:method_appearance_graph} is a first-order Markov chain, each transition only depends on the previous state, \ie{} {\small${p^{\mathcal{G}}_{X_t|X_t^+}(l|i) = \sum_{\textbf{q}_{t+k}^m \in \textbf{Q}_{t+k}}p^{\mathcal{G}}_{X_t|X_{t+k}}(l|m)p^{\mathcal{G}}_{X_{t+k}|X_t}(m|l)}$}. Moreover, we marginalize over all possible transition states {\small$\textbf{q}_{t+k}^m \in \textbf{Q}_{t+k}$}.

(4) for a chosen starting node $\textbf{q}_{t}^i$ and ending node $\textbf{q}_{t}^l$, \\ {\small${C= \sum_{\textbf{q}_{t+k}^m \in \textbf{Q}_{t+k}}p^{\mathcal{G}}_{X_t|X_{t+k}}(l|m)p^{\mathcal{G}}_{X_{t+k}|X_t}(m|l)p^{\mathcal{G}}_{X_t^+}(i)}$} is a normalization constant.

(5) according to our definition of the transition probability matrices for a random walk on an appearance graph $\mathcal{G}$ (\cref{ssec:method_cycle}).

\end{proof}

\vspace{-1em}
\section{Training a Walker} \label{app:ssec:walker_training}

\begin{algorithm}[ht]
\caption{Training pipeline of Walker for identifying and optimizing pseudo-assignments.} \label{alg:training}
\small
\begin{algorithmic}[1]
    \INPUT detections $\mathcal{D}_t$ at time $t$ and detections $\mathcal{D}_{t+k}$ at time $t+k$, ground-truth detections $\hat{\mathcal{D}}_t$ at time $t$ and ground-truth detections $\hat{\mathcal{D}}_{t+k}$ at time $t+k$, setting \texttt{setting} (\textit{dense} or \textit{sparse}) 
    \State \AlgoComment{embed detections}
    \State $\mathbf{Q}_{t}$ = \texttt{embed}($\mathcal{D}_t$)
    \State $\mathbf{Q}_{t+k}$ = \texttt{embed}($\mathcal{D}_{t+k}$)
    \State \AlgoComment{select reference nodes for walk based on setting}
    \If{\texttt{setting} == \textit{dense}} 
        \State $\bar{\mathcal{D}}_{t}$ = $\hat{\mathcal{D}}_{t}$ 
        \State $\bar{\mathcal{D}}_{t+k}$ = $\hat{\mathcal{D}}_{t+k}$ 
    \ElsIf{\texttt{setting} == \textit{sparse}}
        \State \AlgoComment{filter detections by confidence}
        \State $\bar{\mathcal{D}}_{t}$ = \texttt{filterByConf}($\mathcal{D}_t$, $\beta_{obj}$)
        \State $\bar{\mathcal{D}}_{t+k}$ = \texttt{filterByConf}($\mathcal{D}_{t+k}$, $\beta_{obj}$)
    \EndIf
    \State \AlgoComment{negative-positive balance for walk nodes}
    \State $\mathbf{Q}_{t}$ = ($\mathbf{Q}_{t}^+$, $\mathbf{Q}_{t}^-$) = \texttt{negPosBalance}(($\mathbf{Q}_{t}$, $\mathcal{D}_t$), \texttt{gt}=$\bar{\mathcal{D}}_{t}$, \texttt{neg\_pos\_rate}=3)
    \State $\mathbf{Q}_{t+k}$ = ($\mathbf{Q}_{t+k}^+$, $\mathbf{Q}_{t+k}^-$) = \texttt{negPosBalance}(($\mathbf{Q}_{t+k}$, $\mathcal{D}_{t+k}$), \texttt{gt}=$\bar{\mathcal{D}}_{t+k}$, \texttt{neg\_pos\_rate}=3)
    \State
    \AlgoComment{compute cycle probabilities}
    \State $A_{t^+}^{t+k}$ = \texttt{computeTransition}($\mathbf{Q}_t^+,\mathbf{Q}_{t+k}$)
    \State $A_{t+k}^{t}$ = \texttt{computeTransition}($\mathbf{Q}_{t+k},\mathbf{Q}_t$) 
    \State $\bar{A}_{t^+}^{t}$ = \texttt{concatTransitions}($A_{t}^{t+k},A_{t+k}^{t}$)
    \State
    \AlgoComment{get valid clusters}
    \State $\mathcal{C}_t$ = \texttt{getClusters}($\mathbf{Q}_{t}^+$)
    \State $\mathcal{C}_t$ = \texttt{set}($\mathcal{C}_t^{\text{high}}$)  \LineComment{keep only unique clusters}
    \State $\mathcal{C}_t$ = \texttt{sorted}($\mathcal{C}_t$, \texttt{key}=$\bar{A}_{t^+}^{t}$)
    \State $\mathcal{C}_t^{\text{valid}}$ = \texttt{filterByConf}($\mathcal{C}_t$, $\bar{A}_{t^+}^{t}$, $\beta_{cycle}$)
    \State \AlgoComment{find pseudo-assignments}
    \State $\mathcal{Z}_{t+k}^{\text{assigned}} = [ \; ]$ \LineComment{set of assigned clusters}
    \For{$\mathcal{C}_t^{i}$ \texttt{in} $\mathcal{C}_t^{\text{valid}}$}
        \State \AlgoComment{find match not in $\mathcal{Z}_{t+k}^{\text{assigned}}$}
        \State $\mathcal{Z}_{t+k}^i$ = \texttt{findMatch}($A_{t^+}^{t+k}$, $\mathcal{C}_t^{i}$, $\mathcal{Z}_{t+k}^{\text{assigned}}$)
        \State $\mathcal{Z}_{t+k}^{\text{assigned}}$.\texttt{append}($\mathcal{Z}_{t+k}^i$)
    \EndFor
    \State \AlgoComment{compute losses}
    \State $\mathcal{L}_{\text{cycle}}$ = \texttt{cycleLoss}($\bar{A}_{t^+}^{t}$, $\mathcal{C}_t^{\text{high}}$)
    \State $\mathcal{L}_{\text{forward}}$ \kern-0.2em = \kern-0.2em \texttt{forwardLoss}($A_{t^+}^{t+k}$, \kern-0.2em $\mathcal{C}_t^{\text{valid}}$, \kern-0.2em $\mathcal{Z}_{t+k}^{\text{assigned}}$)
    \State $\mathcal{L}_{\text{total}}$ = $\mathcal{L}_{\text{cycle}}$ + $\mathcal{L}_{\text{forward}}$
\end{algorithmic}
\end{algorithm}

We here provide additional details on Walker's training, which we introduced in \cref{ssec:method_cycle}, \cref{ssec:method_forward}, \cref{ssec:method_total_loss}. In particular, we discussed our multi-positive contrastive random walk in \cref{ssec:method_cycle}, our cluster-wise forward assignment and optimization in \cref{ssec:method_forward}, and the total loss in \cref{ssec:method_total_loss}.
To make the understanding of our training pipeline easier, we provide pseudo-code in \cref{alg:training}.

\myparagraph{Node Embedding.} During one training iteration, we are given the detections $\mathcal{D}_t$ on $I_t$ and $\mathcal{D}_{t+k}$ on $I_{t+k}$, and the ground-truth detections $\hat{\mathcal{D}}_t$ on $I_t$ and $\hat{\mathcal{D}}_{t+k}$ on $I_{t+k}$.
We first embed the detections to obtain their embeddings, \ie $\mathbf{q}_t$ and $\mathbf{q}_{t+k}$ respectively.

\myparagraph{Node Selection.} Depending on the setting - \ie \textit{dense} or \textit{sparse} (see \cref{ssec:exp_evaluation_protocol}) - we use different policies for selecting the positive and negative nodes in each frame.
Note that we defined in \cref{ssec:method_appearance_graph} the positive nodes as the ones with high \ac{iou} with a set of reference bounding boxes $\bar{\mathcal{D}}_t$.
In the \textit{sparse} setting, we cannot assume the detection annotations to be available for both key and reference frame. Thus, we use the high-confidence detections ${\mathcal{D}}_t^{\text{high}}$ as set of reference bounding boxes $\bar{\mathcal{D}}_t = {\mathcal{D}}_t^{\text{high}}$.
In the \textit{dense} setting, detection annotations are available for all frames. We can thus reliably identify good nodes over which performing our contrastive random walk as the nodes overlapping with the ground truth bounding boxes $\hat{\mathcal{D}}_t$, \ie $\bar{\mathcal{D}}_t = \hat{\mathcal{D}}_t$.
Given the reference bounding boxes $\bar{\mathcal{D}}_t$, we sample positive and negative nodes with a rate of $1/3$.

\myparagraph{Cluster Assignment.} We then compute the forward $A_{t^+}^{t+k}$, backward $A_{t+k}^{t}$, and cycle  $\bar{A}_{t^+}^{t}$ transition probabilities (\cref{ssec:method_cycle}).
We obtain the set of unique clusters $\mathcal{C}_t$ in the key frame $I_t$, and sort and filter them by their cluster cycle probability, ensuring that it must be higher than a threshold $\beta_{\text{cycle}}$.
Finally, we incrementally match key clusters to reference clusters $\mathcal{Z}_{t+k}^i$ based on their max-likelihood transition state, as introduced in \cref{eq:max_likelihood_state}.

\myparagraph{Total Loss.} The pseudo-assignments identified with the algorithm described above are then optimized with the forward loss $\mathcal{L}_{\text{forward}}$, applied jointly with the cycle loss $\mathcal{L}_{\text{cycle}}$.

\section{Tracking with Walker} \label{app:ssec:walker_inference}

We introduced Walker's tracking scheme in \cref{ssec:method_walker_matching}.
To make the understanding of our matching pipeline with fused motion and appearance easier, we provide in \cref{alg:inference_plus_plus} the matching pseudo-code for a whole video $\texttt{V}$.

Inspired by BYTE~\cite{zhang2022bytetrack}, Walker adopts a two-stage matching scheme.
Let $\mathcal{T}$ be the tracklets of the video up to time $t-1$.
Let $\texttt{Det}$ be the object detector.
Let $I_t$ be the incoming frame at time $t$.
$\mathcal{D}_t = \texttt{Det}(I_t)$ is the set of detections predicted by the object detector on $I_t$.
We define the set of high-confidence detections ${\mathcal{D}_t^{\text{high}} = \{d_t^i \in \mathcal{D}_t \; | \; \text{conf}(d_t^i) \ge \beta_{\text{high}}\}}$ as those with confidence greater than a threshold $\beta_{\text{high}}$,
and the set of low-confidence detections ${\mathcal{D}_t^{\text{low}} = \{d_t^i \in \mathcal{D}_t \; | \; \beta_{\text{low}} \le \text{conf}(d_t^i) < \beta_{\text{high}}\}}$ as those with confidence between thresholds $\beta_{\text{low}}$ and $\beta_{\text{high}}$.

In the \textit{first association stage}, Walker matches high-confidence detections $\mathcal{D}_t^{\text{high}}$ to tracklets $\mathcal{T}$ based on our cost matrix $W$ (defined in \cref{eq:cost_matrix_walker_plus_plus}) that fuses motion and appearance costs. 
In the \textit{second association stage}, low confidence detections $\mathcal{D}_t^{\text{low}}$ are assigned to the remaining tracklets $\mathcal{T}_{\text{remain}}$ based on their \ac{iou}.
Unmatched tracklets $\mathcal{T}_{\text{unmatched}}$ are deleted, and new tracklets are initialized from the remaining high-confidence detections $\mathcal{D}_t^{\text{remain}}$.

Track rebirth~\cite{wojke2017simple,zhou2020tracking} is not shown in the algorithm for simplicity.
For additional details on the track management scheme, refer to BYTE~\cite{zhang2022bytetrack}. 
\begin{algorithm}[ht]
\caption{Inference pipeline of Walker for associating objects across a video sequence.}
    \label{alg:inference_plus_plus}
\small
    \begin{algorithmic}[1]
        \INPUT{A video sequence $\texttt{V}$; object detector $\texttt{Det}$}
        \OUTPUT{Tracks $\mathcal{T}$ of the video}
        
        \State Initialization: $\mathcal{T} \leftarrow \emptyset$
        \For{frame $I_t \in \texttt{V}$}
            \State \AlgoComment{predict detection boxes \& scores}
            \State $\mathcal{D}_t \leftarrow \texttt{Det}(I_t)$ 
            \State $\mathcal{D}_t^{\text{high}} \leftarrow \emptyset$ 
            \State $\mathcal{D}_t^{\text{low}} \leftarrow \emptyset$ 
            \For{$d_t^i \in \mathcal{D}_t$}
               \If{$\text{conf}(d_t^i) \ge \beta_{\text{high}}$}
                   \State $\mathcal{D}_t^{\text{high}} \leftarrow  \mathcal{D}_t^{\text{high}} \cup \{d_t^i\}$ 
               \ElsIf{$\beta_{\text{low}} \le \text{conf}(d_t^i) < \beta_{\text{high}}$}
                   \State $\mathcal{D}_t^{\text{low}} \leftarrow  \mathcal{D}_t^{\text{low}} \cup \{d\}$ 
               \EndIf
            \EndFor
            
                \State \AlgoComment{predict new locations of tracks}
            \For{$t \in \mathcal{T}$}
                \State $t \leftarrow \texttt{KalmanFilter}(t)$ 
            \EndFor
        	
            \State \AlgoComment{first association}
            \State Associate $\mathcal{T}$ and $\mathcal{D}_t^{\text{high}}$ using $W^{++}$ (\cref{eq:cost_matrix_walker_plus_plus}) and match threshold $\beta_{\text{match}}^{\text{high}}$  
            \State $\mathcal{D}_t^{\text{remain}} \leftarrow \text{remaining object boxes from } \mathcal{D}_t^{\text{high}}$ 
            \State $\mathcal{T}_{\text{remain}} \leftarrow \text{remaining tracks from } \mathcal{T}$ 
            
                \State \AlgoComment{second association}
            
            \State Associate $\mathcal{T}_{\text{remain}}$ and $\mathcal{D}_t^{\text{low}}$ using IoU distance  and match threshold $\beta_{\text{match}}^{\text{low}}$
            
            \State $\mathcal{T}_{\text{unmatched}} \leftarrow \text{remaining tracks from } \mathcal{T}_{\text{remain}}$

            \State \AlgoComment{delete unmatched tracks}
            $\mathcal{T} \leftarrow \mathcal{T} \setminus \mathcal{T}_{\text{unmatched}}$ 
        	
            \State \AlgoComment{initialize new tracks}
            \For{$d_t^j \in \mathcal{D}_t^{\text{remain}}$}
        	\State $\mathcal{T} \leftarrow  \mathcal{T} \cup \{d_t^j\}$ 
            \EndFor
        \EndFor
        \RETURN{$\mathcal{T}$} 
    \end{algorithmic}
\end{algorithm}

\section{Tracking with QD-Walker} \label{app:ssec:qd_walker_inference}
We briefly introduced QD-Walker's appearance-only tracking scheme in \cref{ssec:exp_evaluation_protocol}.
The goal was to provide an appearance-only tracking baseline that could be used to directly compare to QDTrack-S and other self-supervised Re-ID baselines to establish which self-supervised appearance learning schemes translates to the better tracker.

To make the understanding of our appearance-only matching pipeline easier, we provide pseudo-code for one matching step (\cref{alg:inference}).
Inspired by QDTrack~\cite{pang2021quasi,fischer2022qdtrack}, Walker matches detections to tracklets based on their appearance. However, as opposed to QDTrack's bisoftmax, Walker uses the biwalk similarity metric $s_{i,j}^{biwalk}$ introduced in \cref{eq:biwalk_similarity} between the embeddings of the i-th detection and the j-th tracklet.

We borrow from QDTrack the track management scheme to keep track of inactive and currently active tracks and to handle the matching of objects. Active tracks are tracks that have a matching detection in the previous frame, otherwise they become inactive. Tracks that are inactive for $K$ frames will be removed and not be considered for matching. 
In particular, Walker first removes duplicate detections with inter-class \ac{nms} with confidence threshold {\small$\texttt{Det. Conf. Thr.}$} and IoU threshold {\small$\texttt{Det. NMS IoU. Thr.}$}.
Detections are only considered for matching to existing tracks if the detection confidence is above
a threshold $\beta_{\text{obj}}$. 
A match is determined if the biwalk similarity $s_{i,j}^{biwalk}$ is
higher than a threshold $\beta_{\text{match}}$. For unmatched objects that have a detection confidence higher than a threshold $\beta_{\text{new}}$, we initialize a new track instead. 
We keep the unmatched objects as backdrops for $L$ frames and use them as matching candidates. Detections that are matched to backdrops will thus not be matched to existing tracks. 
The tracklet embeddings are updated with an exponential moving average with momentum $m$.
For additional details on the track management scheme, refer to the original QDTrack paper~\cite{pang2021quasi}. 

\begin{algorithm}[ht]
    \caption{Inference pipeline of QD-Walker for associating objects across a video sequence.}
    \label{alg:inference}
\small
    \begin{algorithmic}[1]
        \INPUT frame index $t$, detections $\textbf{b}_i$, scores $s_i$, detection embeddings $\textbf{n}_i$ for $i=1 \ldots N$, and track embeddings $\textbf{m}_j$ for $j=1 \ldots M$
        \State \AlgoComment{compute matching scores}
        \State \texttt{DuplicateRemoval}($\textbf{b}_i$)
        \For{$i=1\ldots N, j=1\ldots M$} 
        \State \textbf{f}$(i, j) = $ \texttt{biwalk}($\textbf{n}_i, \textbf{m}_j$)
        \EndFor
        \State \AlgoComment{track management}
        \For{$i=1\ldots N$} 
        \State $c$ = \texttt{max}$\left(\textbf{f}(i)\right)$ \LineComment{match confidence}
        \State $j_{\texttt{match}}$ = \texttt{argmax}$\left(\textbf{f}(i)\right)$ \LineComment{matched track ID}
        \State \AlgoComment{object match found}
        \If{$c$ > $\beta_{\texttt{match}}$ \textbf{and} $s_i >$ $\beta_{\texttt{obj}}$}
        \Statex \hskip1.5em \textbf{and} \texttt{isNotBackdrop}$\left(j_{\texttt{match}}\right)$ 
        \State \AlgoComment{update track}
        \State \texttt{updateTrack}$\left(j_{\texttt{match}}, \textbf{b}_i, \textbf{n}_i, t\right)$ 
        \ElsIf{$s_i >$ $\beta_{\texttt{new}}$} 
        \State \AlgoComment{create new track}
        \State \texttt{createTrack}$\left(\textbf{b}_i, \textbf{n}_i, t\right)$
        \Else{}
        \State \AlgoComment{add new backdrop}
        \State \texttt{addBackdrop}$\left(\textbf{b}_i, \textbf{n}_i, t \right)$
        \EndIf
        \EndFor
    \end{algorithmic}
\end{algorithm}

\section{Implementation Details}
\label{app:sec:implementation_details}
We report the training and inference hyperparameters for Walker in \cref{tab:hyperparameters}, identified by parameter-search on the validation set of each dataset.
Since our inference algorithm builds on top of BYTE and QDTrack, we take their hyperparameters directly unless differently specified.
Notice that Walker shares the same trained model and parameters with QD-Walker, only the inference scheme differs.
Results reported for other trackers are directly taken from their papers or re-run following the hyperparameters introduced in the respective paper.

\myparagraph{Training Hyperparameters}
The key frame is sampled from the set of frames with bounding box annotations, \ie in the sparse setting we assume that one frame every $k$ is labeled starting from the first frame in the video sequence.
We sample the reference frame from a neighborhood of the key frame, where the neighborhood width is $\hat{k}$.
For data augmentation, we utilize mosaic augmentation on key and reference frame, followed by consistent photometric augmentations as in~\cite{zhang2022bytetrack}. We then apply non-consistent multi-scale resizing augmentations on key and reference frame, with a scale range (0.5, 1.5) around the basic image size 1440 x 800.

\myparagraph{Inference Hyperparameters}
We report the inference hyperparameters for Walker following the naming convention established throughout the paper, and re-iterated in \cref{app:ssec:walker_training} and \cref{app:ssec:walker_inference}.

\begin{table}[]
\centering
\scriptsize
\setlength{\tabcolsep}{5pt}
\caption{\textbf{Hyper-parameters used in each benchmark.} We include both training and inference parameters of Walker across all datasets.}
\label{tab:hyperparameters}
\begin{tabular}{@{}llccc@{}}
\toprule
                                                                                                        & Parameter                                                    & MOT17 & DanceTrack & BDD100K \\ \midrule
\raisebox{-2.5\normalbaselineskip}[0pt][0pt]{\rotatebox[origin=c]{90}{\textbf{Training}}}                 & $\lambda_1$                                                  & 1.0   & 1.0        & 0.5     \\
                                                                                                        & $\lambda_2$                                                  & 2.0   & 2.0        & 1.0     \\
                                                                                                        & $\hat{k}$                                                    & 10    & 10         & 3       \\
                                                                                                        & $\alpha_1$                                         & 0.7   & 0.7        & 0.7     \\
                                                                                                        & $\alpha_2$                                         & 0.3   & 0.3        & 0.3     \\
                                                                                                        & $\beta_{\text{obj}}$                                         & 0.3   & 0.3        & 0.3     \\
                                                                                                        & $\beta_{\text{cycle}}$                                       & 0.8   & 0.8        & 0.8     \\
                                                                                                        & $\tau$                                                       & 0.05  & 0.05       & 0.05    \\ \midrule
\raisebox{-6\normalbaselineskip}[0pt][0pt]{\rotatebox[origin=c]{90}{\textbf{Inference}}}                 & $\texttt{Det. Conf. Thr.}$                         & 0.1   & 0.1        & 0.1     \\
                                                                                                        & $\texttt{Det. NMS IoU Thr.}$                            & 0.7   & 0.6        & 0.65    \\
                                                                                                        & $\beta_{\text{new}}$                                         & 0.75  & 0.8        & 0.5     \\
                                                                                                        & $\beta_{\text{high}}$                                        & 0.3   & 0.6        & 0.35    \\
                                                                                                        & $\beta_{\text{low}}$                                         & 0.1   & 0.1        & 0.1     \\
                                                                                                        & $\beta_{\text{match}}^{\text{high}}$ & 0.1   & 0.1        & 0.1     \\
                                                                                                        & $\beta_{\text{biwalk}}$ & 0.2   & 0.2        & 0.2     \\
                                                                                                        & $\beta_{\text{IoU}}$                                         & 0.5   & 0.5        & 0.5     \\
                                                                                                        & $\lambda_{\text{biwalk}}$                                    & 2.0   & 2.0        & 2.0     \\
                                                                                                        & $\beta_{\text{match}}^{\text{low}}$                          & 0.5   & 0.5        & 0.5     \\
                                                                                                        & $\beta_{\text{cycle}}$                                       & 0.1   & 0.1        & 0.1     \\
                                                                                                        & $\tau$                                                       & 0.07  & 0.07       & 0.07    \\
                                                                                                        & $K$                                                          & 30    & 20         & 10      \\
                                                                                                        & $m$                                                          & 0.5   & 0.8        & 0.8     \\ \bottomrule
\end{tabular}
\end{table}

\section{Datasets and Annotations} \label{app:sec:datasets}
The minimal annotation frame rate found across tracking datasets is 1 FPS~\cite{dave2020tao}. Under this cut-off value, annotating tracking is often not possible due to the limited living span of objects in a video.  
For this reason, the TAO dataset~\cite{dave2020tao} was originally annotated at 1 Hz.
NuScenes~\cite{caesar2020nuscenes} is annotated only at 2 Hz due to the difficulty in calibration and syncronization of multiple sensors.
However, the large differences in appearance across the sparsely annotated frames in such datasets makes it difficult to learn supervised trackers.
For this reason, the TAO dataset~\cite{dave2020tao} was later refined to 6 FPS~\cite{athar2023burst}.
By not requiring instance labels, a good self-supervised tracker would achieve good tracking performance even when trained under a sparse annotation regimen, as it could make use of the unlabeled frames.

For this reason, we choose to evaluate self-supervised trackers trained with detection annotations at 0.1 FPS (\cref{ssec:exp_evaluation_protocol}), a value sensitively below the common annotation rate and often sparser than the average object living time in a video.
Note that on MOT17 we only validate the dense protocol due to the very small size of its half-train set (only 7 videos totalling 2658 frames).
Self-supervised tracking methods leveraging temporal self-supervision can make full use of the video stream, even in correspondence of the unlabeled frames, overcoming the limitations of training supervised trackers on sparsely annotated data. 
Moreover, by learning from such a low annotation frame rate, self-supervised multiple object tracking algorithms such as Walker allow to significantly reduce the annotation cost for video datasets.
Finally, Walker can be in principle extended to fully unlabeled videos. Given a pre-trained object detector, Walker can be used to train the embedding head on the unlabeled videos while keeping the detector frozen or finetuning it with knowledge distillation techniques.
We leave this interesting application to future work.

\section{Additional Results}\label{app:ssec:exp_add_results}

\subsection{Additional Self-supervised Re-ID baselines}
We compare to additional self-supervised Re-ID baselines~\cite{bastani2021self,ho2020two,wang2021different}. Since such methods do not provide an official implementation, or they cannot be easily extended to an appearance-only setting, we compare Walker against their published results.

In \cref{tab:ablation_public_detections}, we compare on MOT17's public Faster R-CNN detections against Bastani \etal~\cite{bastani2021self} and Ho \etal~\cite{ho2020two}.
Walker greatly outperforms both approaches, showing the superiority of our self-supervised appearance representations.

In \cref{tab:comparison_crw}, we compare against a straightforward extension of the CRW to Re-ID by directly using the CRW module trained for point correspondence as an object-level Re-ID module.
Although their performance is satisfying (albeit greatly supported by the two-stage pipeline and motion-based heuristics of the JDE's algorithm), the appearance representations learned by the original CRW algorithm are not object-specific and do not enforce mutual exclusivity.
By addressing both limitations, Walker achieves higher performance.

In \cref{tab:hard}, we report the performance compared on \textit{all} and \textit{top-5 hardest} sequences from DanceTrack val. We compare Walker to ByteTrack~\cite{zhang2022bytetrack} and ByteTrack~+~\cite{jabri2020space}.
We choose ByteTrack as a representative motion-only tracker, and naively extend it with a pre-trained \cite{jabri2020space} as Re-ID head. Since \cite{jabri2020space} was trained on the DAVIS dataset, ByteTrack + \cite{jabri2020space} drastically fails to cope with DanceTrack's similar object appearances, worsening ByteTrack's association (\cref{tab:hard}).

\begin{table}[t]
\centering
\scriptsize
\setlength{\tabcolsep}{5pt}
\caption{\textbf{Comparison to baselines on public detections.} We compare to existing baselines which report results on the public detection set of MOT17. For a fair comparison, we use Faster R-CNN and train only on MOT17, without using Crowdhuman.}
\label{tab:ablation_public_detections}
\begin{tabular}{@{}llll@{}}
\toprule
Method         & MOTA & IDF1 & MOTP      \\ \midrule
Bastani et al.~\cite{bastani2021self} & 56.8 & 58.3 & \textbf{-} \\
Ho et al.~\cite{ho2020two}      & 48.1 & -    & 76.7       \\
Walker         & \textbf{68.0} & \textbf{64.5} & \textbf{78.4}    \\ \bottomrule
\end{tabular}
\end{table}

\begin{table}[t]
\centering
\scriptsize
\setlength{\tabcolsep}{5pt}
\caption{\textbf{Comparison to CRW as Re-ID.} We compare Walker on the MOT17 validation set against the CRW used as a Re-ID module in a JDE~\cite{wang2020towards} tracker as in \cite{wang2021different}. Both methods combine appearance with motion.}
\label{tab:comparison_crw}
\begin{tabular}{@{}lll@{}}
\toprule
Method                                       & HOTA      & IDF1      \\ \midrule
JDE-CRW~\cite{wang2021different} & 61.7      & 73.0      \\
Walker (Ours)                  & \textbf{63.6} & \textbf{77.4} \\ \bottomrule
\end{tabular}
\end{table}

\subsection{Ablation on Method Details}
We here ablate on the method components that leverage the quasi-dense nature of our temporal object appearance graph. In particular, we ablate on (i) the effectiveness of the proposed method components, (ii) the effect of different appearance-based match metrics, (iii) the use of a single-positive vs. a multi-positive contrastive cycle consistency objective, and (iv) the importance of enforcing mutually-exclusive assignments.

\myparagraph{Method Components.} 
We ablate on the effectiveness of each proposed component (\cref{tab:method_components}) on top of the naive quasi-dense contrastive random walk baseline (QD-CRW).
We incrementally add our multi-positive contrastive objective (+ multi-positive), enforce mutually-exclusive connectivity (+ mutually-exclusive), replace the bisoftmax similarity with our biwalk match metric in QDTrack's appearance-only inference (+ biwalk), and add motion constraints to reject unlikely appearance-based associations (+ motion).
While all rows in \cref{tab:method_components} learn from our proposed \ac{toag}, our contributions clearly promote an optimal graph topology for \ac{mot} (5.1 vs. 38.6 AssA).

\noindent\begin{minipage}{0.55\textwidth}

\centering
\setlength{\tabcolsep}{2.5pt}
\scriptsize
\captionof{table}{Ablation on our individual method components on top of the naive quasi-dense CRW (QD-CRW) on DanceTrack val.}
\label{tab:method_components}
\begin{tabular}{@{}lccc@{}}
\toprule
Method                & HOTA          & AssA          & DetA         \\ \midrule
QD-CRW                & 19.2          & 5.1           &   \textbf{74.1}     \\
+ multi-positive      & 46.3          & 30.2          & 71.8              \\
+ mutually-exclusive  &   47.3          &   31.3          & 71.7             \\
+ biwalk (= QD-Walker)   &   49.0          &   32.8          &   73.6              \\
+ motion (= Walker) &   \textbf{53.0} &   \textbf{38.6} &   73.1         \\ \bottomrule
\end{tabular}

\end{minipage}
\quad
\begin{minipage}{0.4\textwidth}

\centering
\setlength{\tabcolsep}{2.5pt}
\scriptsize
\captionof{table}{Ablation on match metrics for appearance-only tracking (QD-Walker) on DanceTrack val.}
\label{tab:ablation_match_metric}
\begin{tabular}{@{}lccc@{}}
\\

\toprule
Metric                              & HOTA & AssA & DetA \\ \midrule
Cosine                              &  47.3 &  31.3 &  71.7 \\
Bisoftmax~\cite{fischer2022qdtrack} &  46.8 &  30.6 &  71.7 \\
Biwalk                              &  \textbf{49.0} &  \textbf{32.8} &   \textbf{73.6} \\ \bottomrule
\\
\end{tabular}

\end{minipage}

\myparagraph{Appearance-based Match Metrics.}
We ablate on the effect of different appearance-based similarity metrics in appearance-only MOT with QD-Walker (\cref{tab:ablation_match_metric}). Our proposed biwalk improves the overall tracking performance.

\myparagraph{Multi-positive Cyclic Contrastive Objective.}
In \cref{tab:cycle_targets}, we ablate on different formulations of our cycle consistency formulation introduced in \cref{ssec:method_cycle}.
We report the results for cycle walks optimized wrt. a single target (a), and multiple targets (b). We find that our proposed multi-positive formulation is remarkably more effective than the naive single positive baseline. We argue that the single-positive baseline treats as negatives for the contrastive loss also all the other nodes expect for the self node that represent detections which are highly overlapping with the target node, and likely to represent the same instance. Consequently, a significant amount of noise is injected in the training, making it more difficult for the embedding head to discriminate instances. We solve this problem with our multi-positive formulation, which enables multiple positive target for each contrastive random walk. 

\begin{table}[]
\centering
\setlength{\tabcolsep}{2.5pt}
\scriptsize
\caption{\textbf{Ablation on the selection policy for the cycle walk targets.} We ablate on the DanceTrack validation set on different options of the target nodes to optimize for a cycle walk $\mathcal{G}_i$ starting from a node $\mathbf{q}_{t^+}^i$ in $I_t$ and ending on $\mathbf{q}_{t^+}^i$ itself. The forward loss is not applied here. Optimizing cycle transitions only with respect to the destination node  $\mathbf{q}_{t^+}^i$ itself (a) considers as negatives also the highly overlapping nodes which are likely to represent the same instance, creating a conflicting self-supervisory signal. This problem is solved by considering as positives all the nodes $Y_i^+$ highly overlapping with  $\mathbf{q}_{t^+}^i$.}
\label{tab:cycle_targets}
\begin{tabular}{@{}llccccc@{}}
\toprule
Selection Policy   & Cycle Prob.             & HOTA          & AssA          & DetA          & MOTA          & IDF1          \\ \midrule
a) Single-positive & $p^{\mathcal{G}}_{X_t | X_t^+}(i|i)$     & 39.6          & 22.8          & 69.4          & 79.1          & 37.4          \\
b) Multi-positive  & $p^{\mathcal{G}}_{X_t | X_t^+}( Y_i^+|i)$ & \textbf{48.7} & \textbf{31.1} & \textbf{77.1} & \textbf{88.9} & \textbf{48.0} \\ \bottomrule
\end{tabular}
\end{table}

\begin{table}[t]
\centering
\scriptsize
\setlength{\tabcolsep}{0.5pt}
\caption{\textbf{Ablation on the selection policy for the match pseudo-labels.} 
We ablate on the DanceTrack validation set  on different formulations of the max-likelihood transition state for a cycle walk $\mathcal{G}_i$ starting from a node $\mathbf{q}_{t^+}^i$ in $I_t$ and ending on $\mathbf{q}_{t^+}^i$ itself in $I_t$ after transitioning on $I_{t+k}$. 
\textit{Single-positive} consists in identifying the max-likelihood transition state on the cycle walk starting from a node $\mathbf{q}_{t^+}^i$ and ending on the node $\mathbf{q}_{t^+}^i$ itself;
\textit{Multi-positive} averages over the multi-positive target nodes $Y_i^+$ for a cycle transition starting in  $\mathbf{q}_{t^+}^i$; 
\textit{Cluster-wise Multi-positive} further averages over the nodes in the starting cluster $\mathcal{C}_t^i=Y_i^+$, and enforces cluster-wise mutually-exclusive assignments with the algorithm described in \cref{ssec:method_forward}.}
\label{tab:cluster_voting}
\begin{tabular}{@{}llcccccc@{}}
\toprule
Selection Policy               & Latent Transition Prob.                                     & HOTA          & AssA          & DetA          & MOTA          & IDF1          \\ \midrule
Single-positive             & {\fontsize{6.5}{5}$p^{\mathcal{G}}_{X_{t+k}|X_t ,X_t^+}(j|i,i)$}                          & 46.2          & 29.1          & 76.8          & 87.0          & 45.9          \\
Multi-positive              & {\fontsize{6.5}{5}$p^{\mathcal{G}}_{X_{t+k}|X_t,X_t^+}(j|Y_i^+,i)$}                      & 46.5          & 30.0          & 76.8          & 86.9          & 46.2          \\
\makecell[l]{Cluster-wise \\ Multi-positive} & {\fontsize{6.5}{5}$p^{\mathcal{G}}_{X_{t+k}|X_t,X_t^+}(j| Y_i^+,Y_i^+)$}                     & \textbf{49.8} & \textbf{32.2} & \textbf{77.3} & \textbf{89.4} & \textbf{49.3} \\ \bottomrule
\end{tabular}
\end{table}

\begin{table}[t]
\centering
\scriptsize
\caption{\textbf{Ablation on the loss components.} We ablate on the DanceTrack validation set  on the importance of each proposed loss components.}
\label{tab:loss_components}
\begin{tabular}{llccccc}
\toprule
$\mathcal{L}_{\text{cycle}}$ & $\mathcal{L}_{\text{forward}}$ & HOTA          & AssA          & DetA          & MOTA          & IDF1          \\ \midrule
\checkmark                     & -                       & 48.7          & 31.1          & 77.1          & 88.9          & 48.0          \\
\checkmark                     & \checkmark                       & \textbf{49.8} & \textbf{32.2} & \textbf{77.3} & \textbf{89.4} & \textbf{49.3} \\ \bottomrule
\end{tabular}
\end{table}

\myparagraph{Mutually-exclusive Forward Assignments.}
In \cref{tab:cluster_voting}, we ablate on different policies to identify and optimize the forward assignments according to the formulation introduced in \cref{ssec:method_forward}.
We report the results with cluster-wise mutually-exclusive assignments (c) and assignments that are not mutually-exclusive (a, b). In particular, (a) uses a single-node to single-node cycle walk formulation to independently identify the max-likelihood latent transition state in the reference frame that matches each node in the key frame. (b) further refines it by averaging the latent transition probabilities over the set of possible targets for the cycle walks departing from a node in the key frame. However, both (a) and (b) consider that each starting node can get independent assignments that are not mutually-exclusive, meaning that nodes in $I_t$ from different instances may be assigned to nodes in $I_{t+k}$ from a same instance, causing conflicts in the optimization. This problem is elegantly addressed by our mutually-exclusive cluster-wise assignment and optimization strategy introduced in  \cref{ssec:method_forward}, which (i) prevents nodes from a same cluster in the key frame to be assigned to nodes in different clusters in the reference frame, and (ii) prevents nodes from different clusters in the key frame to be assigned to a same cluster in the reference frame.

\subsection{Ablation on the Impact of the Hyperparameters}
\begin{table}[]
\centering
\scriptsize
\setlength{\tabcolsep}{4pt}
\caption{Ablation on Walker's sensitive inference parameters on DanceTrack val.}
\label{tab:parameters_ablation}
\begin{tabular}{@{}cccccccc@{}}
\toprule
$\beta_{high}$ & $\beta^{high}_{match}$ & $\lambda_{biwalk}$ & HOTA          & DetA          & AssA          & MOTA          & IDF1          \\ \midrule
0.5            & 0.1                    & 1.0                & 52.6          & 73.1          & 38.1          & 86.9          & 56.4          \\
0.5            & 0.1                    & 2.0                & 53.2          & 73.1          & 38.9          & 87.0          & \textbf{57.2} \\
0.5            & 0.2                    & 2.0                & 52.6          & 73.5          & 37.9          & 86.6          & 55.0          \\
0.6            & 0.1                    & 2.0                & \textbf{53.4} & \textbf{73.6} & \textbf{39.0} & \textbf{87.2} & 56.3          \\ \bottomrule
\end{tabular}
\end{table}

Current tracking-by-detection (TbD) methods, including Walker, are hyperparameter-heavy. 
However, Walker's 14 inference hyperparameters are comparable to state-of-the-art tracking-by-detection methods combining motion and appearance,  \eg BoT-SORT and StrongSORT have 13 according to their official code. 
As mentioned in \cref{app:sec:implementation_details}, our inference algorithm builds on QDTrack and BYTE.
When not explicitly mentioned, we keep all hyperparameters as in their original works.
For the remaining hyperparameters, we conducted a grid search. We here report an analysis of the impact of Walker's most-sensitive inference parameters in \cref{tab:parameters_ablation}.

\subsection{Ablation on Loss Components}
In \cref{tab:loss_components}, we ablate on the importance of the cycle and forward losses towards our total loss introduced in \cref{ssec:method_total_loss}.
We find that applying the forward loss on top of the cycle loss results in a considerable improvement in performance, highlighting the importance of identifying and optimizing max-likelihood latent transition states in a mutually-exclusive fashion according to our proposal in \cref{ssec:method_forward}.
In particular, the performance improvements originates from (i) the quality of the forward assignments refined by averaging over all the walks starting from all the nodes in a given cluster and ending on the multi-positive targets for the corresponding starting node, and (ii) the cluster-wise mutual-exclusivity property enforced as described in \cref{ssec:method_forward}.

\begin{table}[t]
\centering
\scriptsize
\setlength{\tabcolsep}{2pt}
\caption{{\textbf{Comparison to ByteTrack +~\cite{jabri2020space} on DanceTrack dense val.} Performance compared on \textit{all} and \textit{top-5 hardest} sequences. FLOPs are computed using an input size of 3x640x640 to the YOLOX-X detector, of 3x256x128 to [20]'s ResNet-18 Re-ID branch and of 320x7x7 (RoI size in YOLOX-X) to our 4conv-1fc emb. head.}}
\label{tab:hard}
\begin{tabular}{@{}lcccccccc@{}}
\toprule
            & \multicolumn{3}{c}{\textit{All}} &  \multicolumn{3}{c}{\textit{Hard (top-5)}}         &   &       \\ \cmidrule(lr){2-4} \cmidrule(lr){5-7}
Method      & HOTA               & AssA              & DetA              & HOTA        & AssA          & DetA          & \makecell{Det. \\FLOPS (G)}  & \makecell{Re-ID \\FLOPS (G)}      \\ \midrule
ByteTrack~\cite{zhang2022bytetrack}   & 48.9               & 33.1              & 72.4              &  35.6           & 18.8              & 68.0              & 281.9           & -                    \\
ByteTrack +~\cite{jabri2020space} & 24.6                   & 15.1                  & 72.1                  & 17.4            & 7.3              & 68.3              & 281.9           & 1.19 ${\times 10^{-3}}$                 \\
Walker      & \textbf{53.0}          & \textbf{38.6}          & \textbf{73.1}     & \textbf{41.6}            & \textbf{ 25.4}             &  \textbf{69.5}             & \textbf{281.9}           & \textbf{0.14} $\mathbf{\times 10^{-3}}$                 \\

\bottomrule
\end{tabular}
\end{table}

\subsection{Ablation on Model Complexity}
In \cref{tab:hard}, we ablate on the FLOPS requirements of different methods on DanceTrack val. We compare Walker to ByteTrack~\cite{zhang2022bytetrack} and ByteTrack~+~\cite{jabri2020space}.
FLOPs are computed using an input size of 3x640x640 to the YOLOX-X detector, of 3x256x128 to [20]'s ResNet-18 Re-ID branch and of 320x7x7 (RoI size in YOLOX-X) to our 4conv-1fc emb. head.
ByteTrack~+~\cite{jabri2020space} requires a separate R-18 Re-ID head which is $\sim \! \! 9 \times$ more computationally expensive (Re-ID FLOPS, \cref{tab:hard}) than our tiny embedding head, which operates on small-size RoIs and is computationally negligible wrt. the detector.

\subsection{Qualitative Results} \label{app:ssec:exp_qualitative}
We report a qualitative comparison on DanceTrack of the existing self-supervised tracking methods, \ie QDTrack-S~\cite{fischer2022qdtrack}, QD-Walker (ours), and Walker (ours).

\cref{fig:vis_dancetrack_demo_0058,fig:vis_dancetrack_demo_0077,fig:vis_dancetrack_demo_0081} show the tracking results for each method, where the same color is used through time to represent the same ID.
\cref{fig:vis_dancetrack_errors_0058,fig:vis_dancetrack_errors_0077,fig:vis_dancetrack_demo_0081} show the ID switches (blue) and correctly tracked bounding boxes (green).
The qualitative results remark the superiority of Walker over QDTrack-S. By sharing the inference algorithm with QDTrack-S, QD-Walker demonstrates the superiority of our self-supervised appearance-learning algorithm, showing significantly less ID switches under complex occlusions.
This is made possible by our temporal self-supervision in videos, which makes our learned appearance descriptors more robust to the sudden appearance and pose changes in highly dynamic videos as the ones in DanceTrack.
Moreover, the improved tracking algorithm of the full Walker further boosts our tracking performance performance. By taking into account the motion information, Walker notably reduces the number of ID switches in uniform appearance settings such as DanceTrack by constraining matches to only happen near likely future positions of an object.
Notably, \cref{fig:vis_dancetrack_errors_0081} shows a case of rapid object motion and sudden pose changes. For ease of visualization, we crop all frames around an area of interest, \ie where the dancers are thrown in the air. The dynamic evolutions that the dancers are performing make tracking extremely difficult for a self-supervised tracker trained on static images such as QDTrack-S. This can be noticed by the high amount of ID switches (blue boxes). Instead, our trackers trained on the temporal video stream learn appearance representations robust to the temporal pose changes of the dancers, as it can be seen by the significantly better results and reduced ID switches.
It is worth noticing that our motion-constrained tracker (Walker) prevents the ID switch at time $t=\hat{t}$ that still occurs in the unconstrained QD-Walker.
Finally, in \cref{fig:vis_dancetrack_errors_0077} we identify a case where both QD-Walker and Walker cannot remedy an ID switch.
Due to the sudden change in appearance and pose of the dancer, our trackers initiate a new tracklet for an already existing object in $t=\hat{t}-k$.

\clearpage
\begin{figure*}[]
\centering
\scriptsize
\setlength{\tabcolsep}{1pt}
\begin{tabular}{cccccc}
 & $t=\hat{t}-2k$ & $t=\hat{t}-k$  & $t=\hat{t}$  & $t=\hat{t}+k$  & $t=\hat{t}+2k$ \\
\raisebox{+1.9\normalbaselineskip}[0pt][0pt]{\rotatebox[origin=c]{90}{QDTrack-S}} & \includegraphics[width=0.19\textwidth]{figures/supplement/vis/dancetrack/0058/demo/static_qdtrack/000120.jpg} & \includegraphics[width=0.19\textwidth]{figures/supplement/vis/dancetrack/0058/demo/static_qdtrack/000124.jpg} & \includegraphics[width=0.19\textwidth]{figures/supplement/vis/dancetrack/0058/demo/static_qdtrack/000128.jpg} & \includegraphics[width=0.19\textwidth]{figures/supplement/vis/dancetrack/0058/demo/static_qdtrack/000132.jpg} & \includegraphics[width=0.19\textwidth]{figures/supplement/vis/dancetrack/0058/demo/static_qdtrack/000136.jpg} \\
\raisebox{+1.7\normalbaselineskip}[0pt][0pt]{\rotatebox[origin=c]{90}{QD-Walker}}    & \includegraphics[width=0.19\textwidth]{figures/supplement/vis/dancetrack/0058/demo/walker/000120.jpg}  & \includegraphics[width=0.19\textwidth]{figures/supplement/vis/dancetrack/0058/demo/walker/000124.jpg} & \includegraphics[width=0.19\textwidth]{figures/supplement/vis/dancetrack/0058/demo/walker/000128.jpg} & \includegraphics[width=0.19\textwidth]{figures/supplement/vis/dancetrack/0058/demo/walker/000132.jpg} & \includegraphics[width=0.19\textwidth]{figures/supplement/vis/dancetrack/0058/demo/walker/000136.jpg} \\
\raisebox{+1.8\normalbaselineskip}[0pt][0pt]{\rotatebox[origin=c]{90}{Walker}}    & \includegraphics[width=0.19\textwidth]{figures/supplement/vis/dancetrack/0058/demo/walker_plus_plus/000120.jpg}  & \includegraphics[width=0.19\textwidth]{figures/supplement/vis/dancetrack/0058/demo/walker_plus_plus/000124.jpg} & \includegraphics[width=0.19\textwidth]{figures/supplement/vis/dancetrack/0058/demo/walker_plus_plus/000128.jpg} & \includegraphics[width=0.19\textwidth]{figures/supplement/vis/dancetrack/0058/demo/walker_plus_plus/000132.jpg} & \includegraphics[width=0.19\textwidth]{figures/supplement/vis/dancetrack/0058/demo/walker_plus_plus/000136.jpg}
\end{tabular}
  \caption{Tracking results on the sequence \textit{0058} of the DanceTrack validation set. We analyze 5 frames centered around the frame \#128 at time $\hat{t}$ and spaced by $k\mkern1.5mu{=}\mkern1.5mu\text{4/30}$ seconds. We compare the self-supervised trackers QDTrack-S~\cite{fischer2022qdtrack}, QD-Walker (ours), and Walker (ours). On each row, boxes of the same color correspond to the same tracking ID.}  \label{fig:vis_dancetrack_demo_0058}
  \vspace{-2em}
\end{figure*}

\begin{figure*}[]
\centering
\scriptsize
\setlength{\tabcolsep}{1pt}
\begin{tabular}{cccccc}
 & $t=\hat{t}-2k$ & $t=\hat{t}-k$  & $t=\hat{t}$  & $t=\hat{t}+k$  & $t=\hat{t}+2k$ \\
\raisebox{+1.9\normalbaselineskip}[0pt][0pt]{\rotatebox[origin=c]{90}{QDTrack-S}} & \includegraphics[width=0.19\textwidth]{figures/supplement/vis/dancetrack/0058/errors/static_qdtrack/000120.jpg} & \includegraphics[width=0.19\textwidth]{figures/supplement/vis/dancetrack/0058/errors/static_qdtrack/000124.jpg} & \includegraphics[width=0.19\textwidth]{figures/supplement/vis/dancetrack/0058/errors/static_qdtrack/000128.jpg} & \includegraphics[width=0.19\textwidth]{figures/supplement/vis/dancetrack/0058/errors/static_qdtrack/000132.jpg} & \includegraphics[width=0.19\textwidth]{figures/supplement/vis/dancetrack/0058/errors/static_qdtrack/000136.jpg} \\
\raisebox{+1.7\normalbaselineskip}[0pt][0pt]{\rotatebox[origin=c]{90}{QD-Walker}}    & \includegraphics[width=0.19\textwidth]{figures/supplement/vis/dancetrack/0058/errors/walker/000120.jpg}  & \includegraphics[width=0.19\textwidth]{figures/supplement/vis/dancetrack/0058/errors/walker/000124.jpg} & \includegraphics[width=0.19\textwidth]{figures/supplement/vis/dancetrack/0058/errors/walker/000128.jpg} & \includegraphics[width=0.19\textwidth]{figures/supplement/vis/dancetrack/0058/errors/walker/000132.jpg} & \includegraphics[width=0.19\textwidth]{figures/supplement/vis/dancetrack/0058/errors/walker/000136.jpg} \\
\raisebox{+1.8\normalbaselineskip}[0pt][0pt]{\rotatebox[origin=c]{90}{Walker}}    & \includegraphics[width=0.19\textwidth]{figures/supplement/vis/dancetrack/0058/errors/walker_plus_plus/000120.jpg}  & \includegraphics[width=0.19\textwidth]{figures/supplement/vis/dancetrack/0058/errors/walker_plus_plus/000124.jpg} & \includegraphics[width=0.19\textwidth]{figures/supplement/vis/dancetrack/0058/errors/walker_plus_plus/000128.jpg} & \includegraphics[width=0.19\textwidth]{figures/supplement/vis/dancetrack/0058/errors/walker_plus_plus/000132.jpg} & \includegraphics[width=0.19\textwidth]{figures/supplement/vis/dancetrack/0058/errors/walker_plus_plus/000136.jpg}
\end{tabular}
  \caption{ID switches on the sequence \textit{0058} of the DanceTrack validation set. We analyze 5 frames centered around the frame \#128 at time $\hat{t}$ and spaced by $k\mkern1.5mu{=}\mkern1.5mu\text{4/30}$ seconds. We compare the self-supervised trackers QDTrack-S~\cite{fischer2022qdtrack}, QD-Walker (ours), and Walker (ours). On each row, boxes colored in green are correctly tracked, while blue ones represent ID switches.}  \label{fig:vis_dancetrack_errors_0058}
\end{figure*}

\clearpage
\begin{figure*}[]
\centering
\scriptsize
\setlength{\tabcolsep}{1pt}
\begin{tabular}{cccccc}
 & $t=\hat{t}-2k$ & $t=\hat{t}-k$  & $t=\hat{t}$  & $t=\hat{t}+k$  & $t=\hat{t}+2k$ \\
\raisebox{+1.9\normalbaselineskip}[0pt][0pt]{\rotatebox[origin=c]{90}{QDTrack-S}} & \includegraphics[width=0.19\textwidth]{figures/supplement/vis/dancetrack/0077/demo/static_qdtrack/000212.jpg} & \includegraphics[width=0.19\textwidth]{figures/supplement/vis/dancetrack/0077/demo/static_qdtrack/000217.jpg} & \includegraphics[width=0.19\textwidth]{figures/supplement/vis/dancetrack/0077/demo/static_qdtrack/000222.jpg} & \includegraphics[width=0.19\textwidth]{figures/supplement/vis/dancetrack/0077/demo/static_qdtrack/000227.jpg} & \includegraphics[width=0.19\textwidth]{figures/supplement/vis/dancetrack/0077/demo/static_qdtrack/000232.jpg} \\
\raisebox{+1.7\normalbaselineskip}[0pt][0pt]{\rotatebox[origin=c]{90}{QD-Walker}}    & \includegraphics[width=0.19\textwidth]{figures/supplement/vis/dancetrack/0077/demo/walker/000212.jpg}  & \includegraphics[width=0.19\textwidth]{figures/supplement/vis/dancetrack/0077/demo/walker/000217.jpg} & \includegraphics[width=0.19\textwidth]{figures/supplement/vis/dancetrack/0077/demo/walker/000222.jpg} & \includegraphics[width=0.19\textwidth]{figures/supplement/vis/dancetrack/0077/demo/walker/000227.jpg} & \includegraphics[width=0.19\textwidth]{figures/supplement/vis/dancetrack/0077/demo/walker/000232.jpg} \\
\raisebox{+1.8\normalbaselineskip}[0pt][0pt]{\rotatebox[origin=c]{90}{Walker}}    & \includegraphics[width=0.19\textwidth]{figures/supplement/vis/dancetrack/0077/demo/walker_plus_plus/000212.jpg}  & \includegraphics[width=0.19\textwidth]{figures/supplement/vis/dancetrack/0077/demo/walker_plus_plus/000217.jpg} & \includegraphics[width=0.19\textwidth]{figures/supplement/vis/dancetrack/0077/demo/walker_plus_plus/000222.jpg} & \includegraphics[width=0.19\textwidth]{figures/supplement/vis/dancetrack/0077/demo/walker_plus_plus/000227.jpg} & \includegraphics[width=0.19\textwidth]{figures/supplement/vis/dancetrack/0077/demo/walker_plus_plus/000232.jpg}
\end{tabular}
  \caption{Tracking results on the sequence \textit{0077} of the DanceTrack validation set. We analyze 5 frames centered around the frame \#222 at time $\hat{t}$ and spaced by $k\mkern1.5mu{=}\mkern1.5mu\text{5/30}$ seconds. We compare the self-supervised trackers QDTrack-S~\cite{fischer2022qdtrack}, QD-Walker (ours), and Walker (ours). On each row, boxes of the same color correspond to the same tracking ID.}  \label{fig:vis_dancetrack_demo_0077}
  \vspace{-2em}
\end{figure*}

\begin{figure*}[]
\centering
\scriptsize
\setlength{\tabcolsep}{1pt}
\begin{tabular}{cccccc}
 & $t=\hat{t}-2k$ & $t=\hat{t}-k$  & $t=\hat{t}$  & $t=\hat{t}+k$  & $t=\hat{t}+2k$ \\
\raisebox{+1.9\normalbaselineskip}[0pt][0pt]{\rotatebox[origin=c]{90}{QDTrack-S}} & \includegraphics[width=0.19\textwidth]{figures/supplement/vis/dancetrack/0077/errors/static_qdtrack/000212.jpg} & \includegraphics[width=0.19\textwidth]{figures/supplement/vis/dancetrack/0077/errors/static_qdtrack/000217.jpg} & \includegraphics[width=0.19\textwidth]{figures/supplement/vis/dancetrack/0077/errors/static_qdtrack/000222.jpg} & \includegraphics[width=0.19\textwidth]{figures/supplement/vis/dancetrack/0077/errors/static_qdtrack/000227.jpg} & \includegraphics[width=0.19\textwidth]{figures/supplement/vis/dancetrack/0077/errors/static_qdtrack/000232.jpg} \\
\raisebox{+1.7\normalbaselineskip}[0pt][0pt]{\rotatebox[origin=c]{90}{QD-Walker}}    & \includegraphics[width=0.19\textwidth]{figures/supplement/vis/dancetrack/0077/errors/walker/000212.jpg}  & \includegraphics[width=0.19\textwidth]{figures/supplement/vis/dancetrack/0077/errors/walker/000217.jpg} & \includegraphics[width=0.19\textwidth]{figures/supplement/vis/dancetrack/0077/errors/walker/000222.jpg} & \includegraphics[width=0.19\textwidth]{figures/supplement/vis/dancetrack/0077/errors/walker/000227.jpg} & \includegraphics[width=0.19\textwidth]{figures/supplement/vis/dancetrack/0077/errors/walker/000232.jpg} \\
\raisebox{+1.8\normalbaselineskip}[0pt][0pt]{\rotatebox[origin=c]{90}{Walker}}    & \includegraphics[width=0.19\textwidth]{figures/supplement/vis/dancetrack/0077/errors/walker_plus_plus/000212.jpg}  & \includegraphics[width=0.19\textwidth]{figures/supplement/vis/dancetrack/0077/errors/walker_plus_plus/000217.jpg} & \includegraphics[width=0.19\textwidth]{figures/supplement/vis/dancetrack/0077/errors/walker_plus_plus/000222.jpg} & \includegraphics[width=0.19\textwidth]{figures/supplement/vis/dancetrack/0077/errors/walker_plus_plus/000227.jpg} & \includegraphics[width=0.19\textwidth]{figures/supplement/vis/dancetrack/0077/errors/walker_plus_plus/000232.jpg}
\end{tabular}
  \caption{ID switches on the sequence \textit{0077} of the DanceTrack validation set. We analyze 5 frames centered around the frame \#222 at time $\hat{t}$ and spaced by $k\mkern1.5mu{=}\mkern1.5mu\text{5/30}$ seconds. We compare the self-supervised trackers QDTrack-S~\cite{fischer2022qdtrack}, QD-Walker (ours), and Walker (ours). On each row, boxes colored in green are correctly tracked, while blue ones represent ID switches.}  \label{fig:vis_dancetrack_errors_0077}
\end{figure*}

\clearpage
\begin{figure*}[]
\centering
\scriptsize
\setlength{\tabcolsep}{1pt}
\begin{tabular}{cccccc}
 & $t=\hat{t}-2k$ & $t=\hat{t}-k$  & $t=\hat{t}$  & $t=\hat{t}+k$  & $t=\hat{t}+2k$ \\
\raisebox{+1.9\normalbaselineskip}[0pt][0pt]{\rotatebox[origin=c]{90}{QDTrack-S}} & \includegraphics[width=0.19\textwidth,trim={10.0cm 10.0cm 10.0cm 1.0cm},clip]{figures/supplement/vis/dancetrack/0081/demo/static_qdtrack/0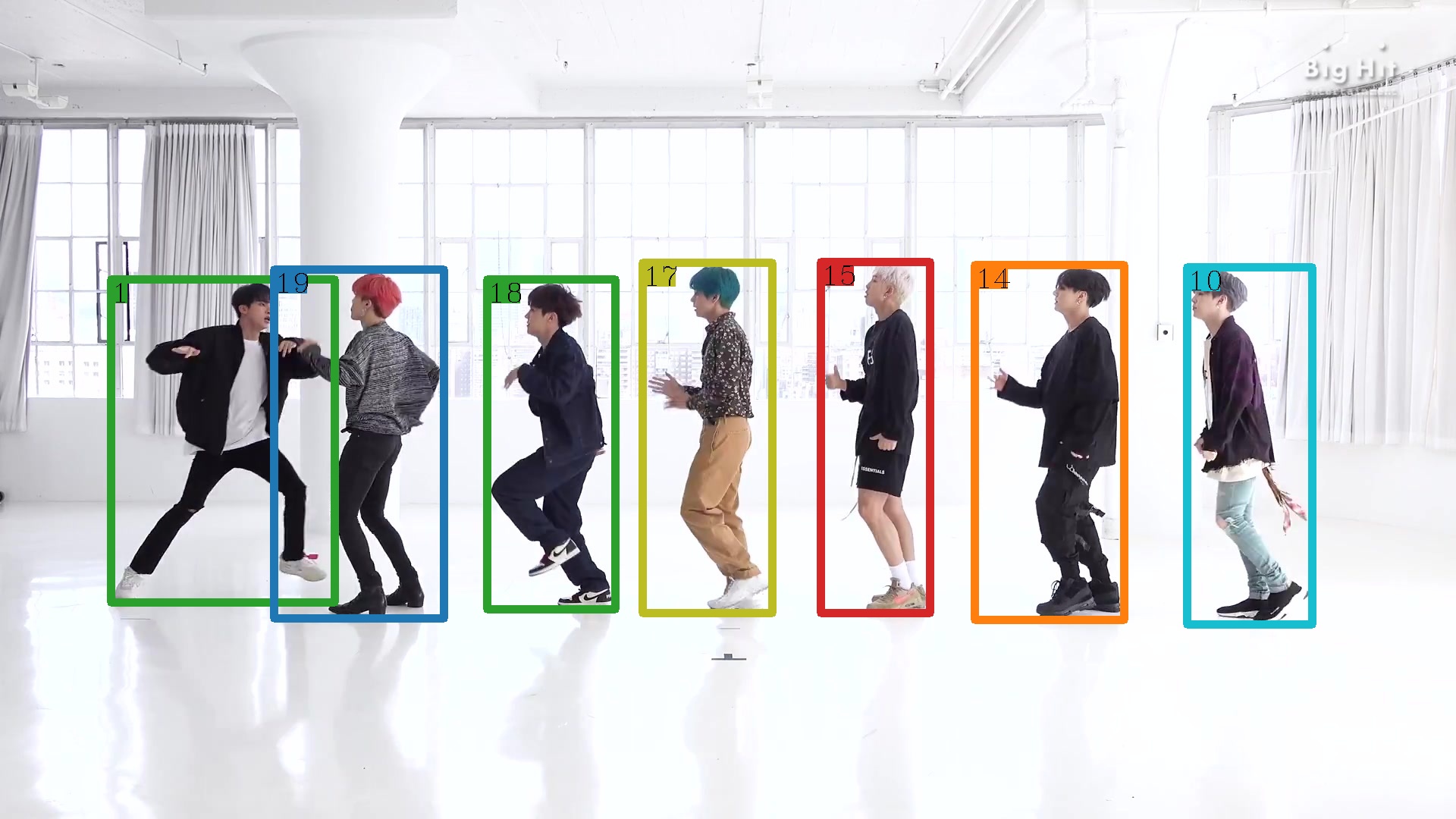} & \includegraphics[width=0.19\textwidth,trim={10.0cm 10.0cm 10.0cm 1.0cm},clip]{figures/supplement/vis/dancetrack/0081/demo/static_qdtrack/0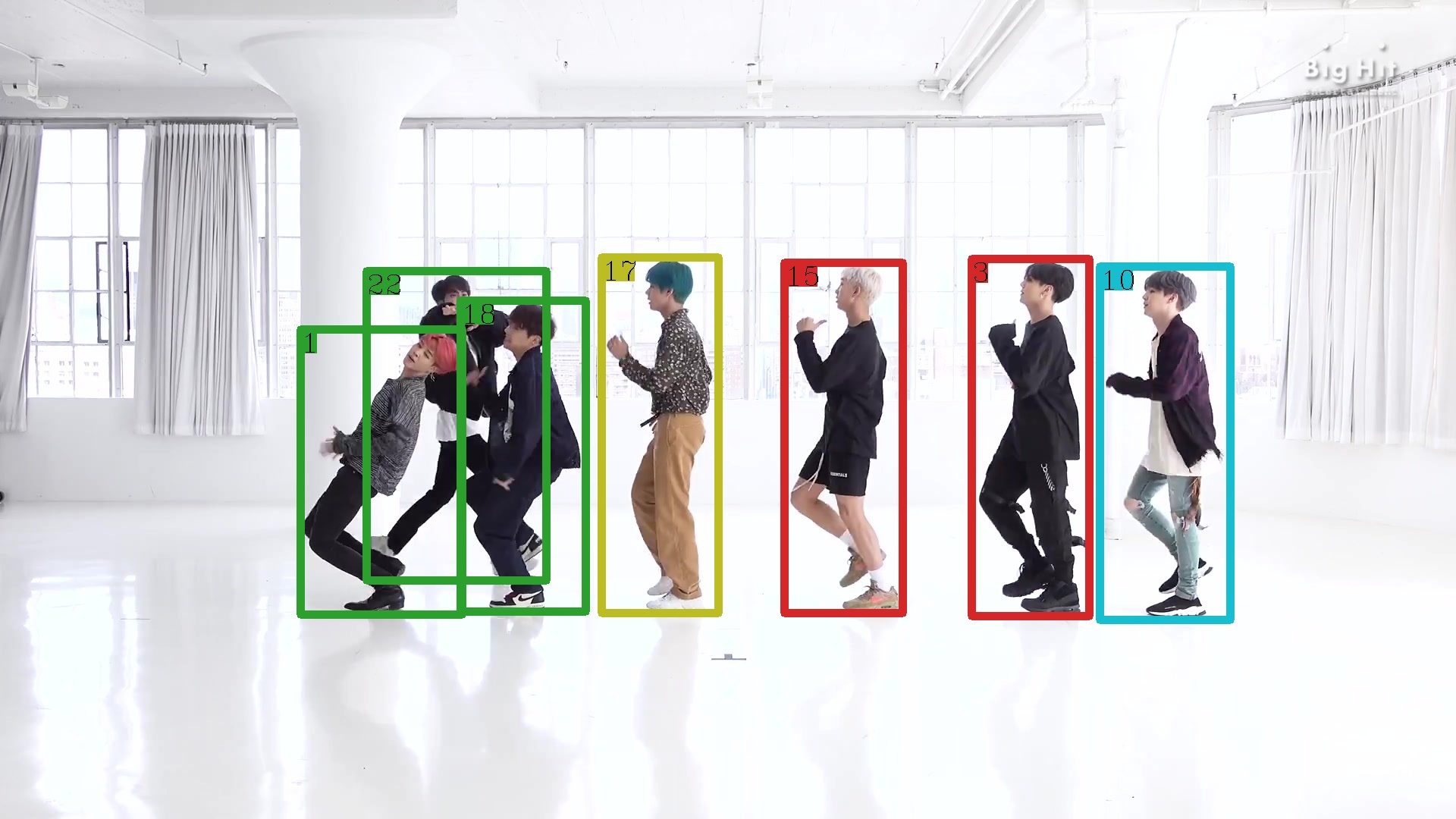} & \includegraphics[width=0.19\textwidth,trim={10.0cm 10.0cm 10.0cm 1.0cm},clip]{figures/supplement/vis/dancetrack/0081/demo/static_qdtrack/0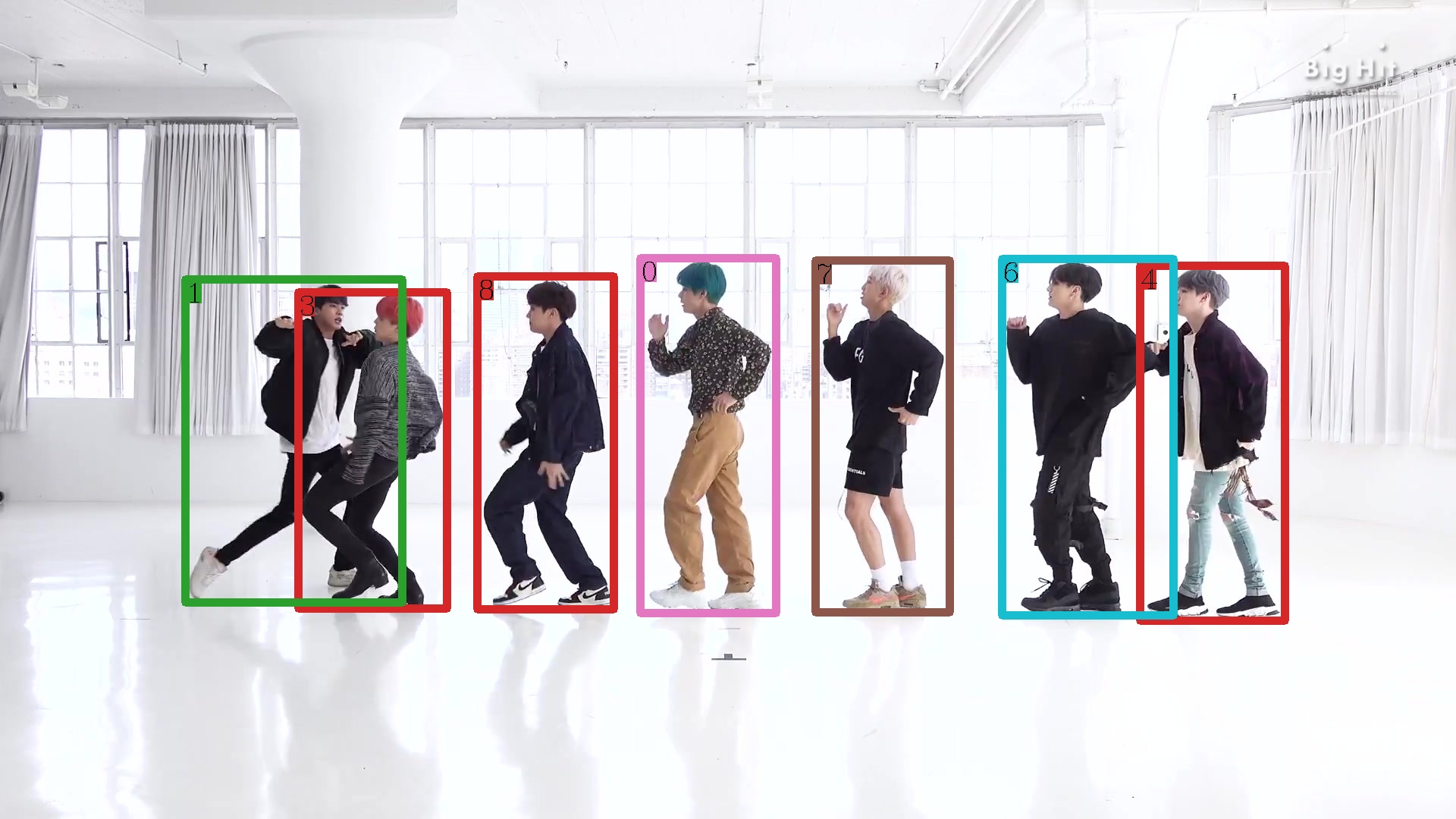} & \includegraphics[width=0.19\textwidth,trim={10.0cm 10.0cm 10.0cm 1.0cm},clip]{figures/supplement/vis/dancetrack/0081/demo/static_qdtrack/0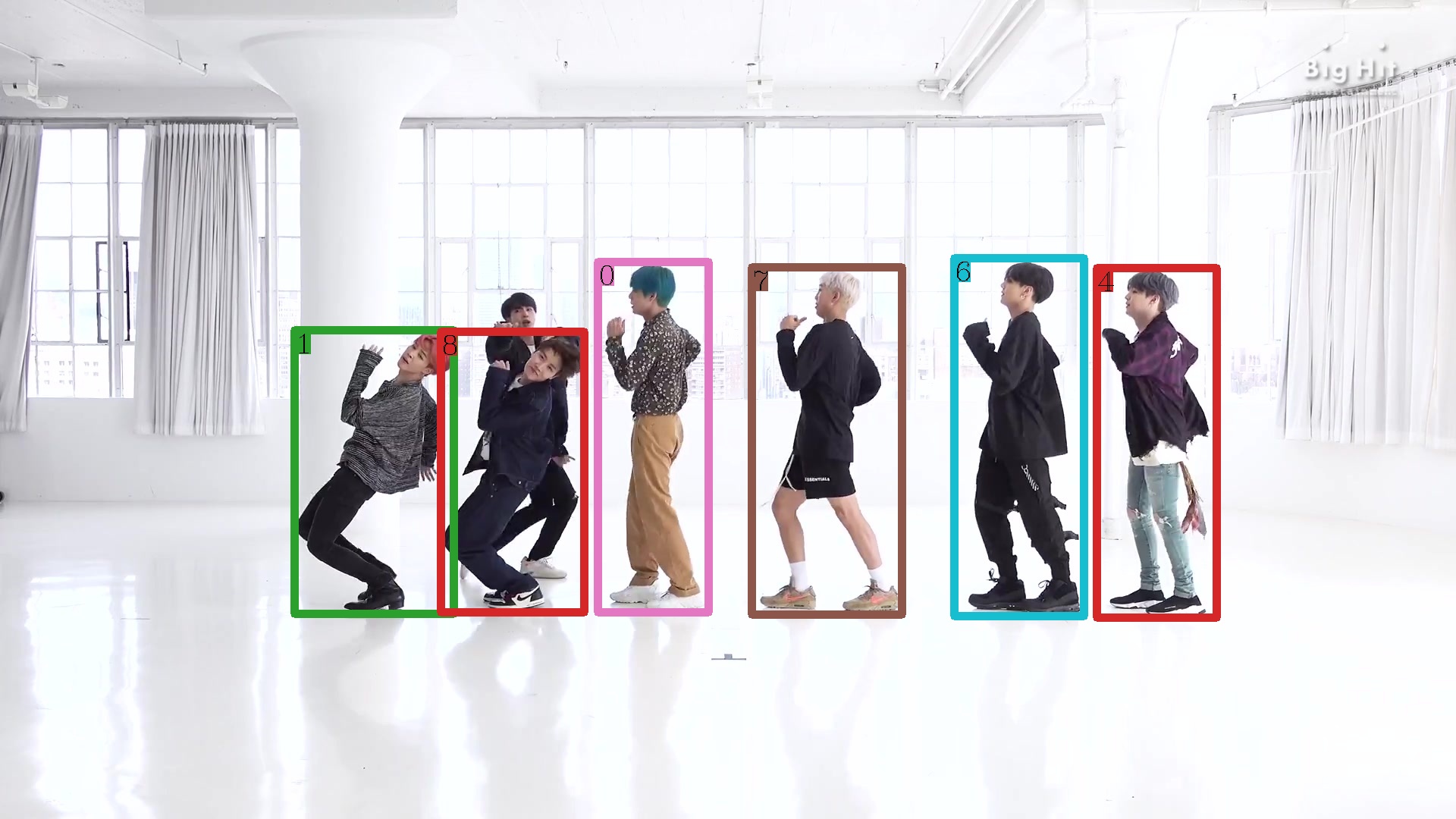} & \includegraphics[width=0.19\textwidth,trim={10.0cm 10.0cm 10.0cm 1.0cm},clip]{figures/supplement/vis/dancetrack/0081/demo/static_qdtrack/0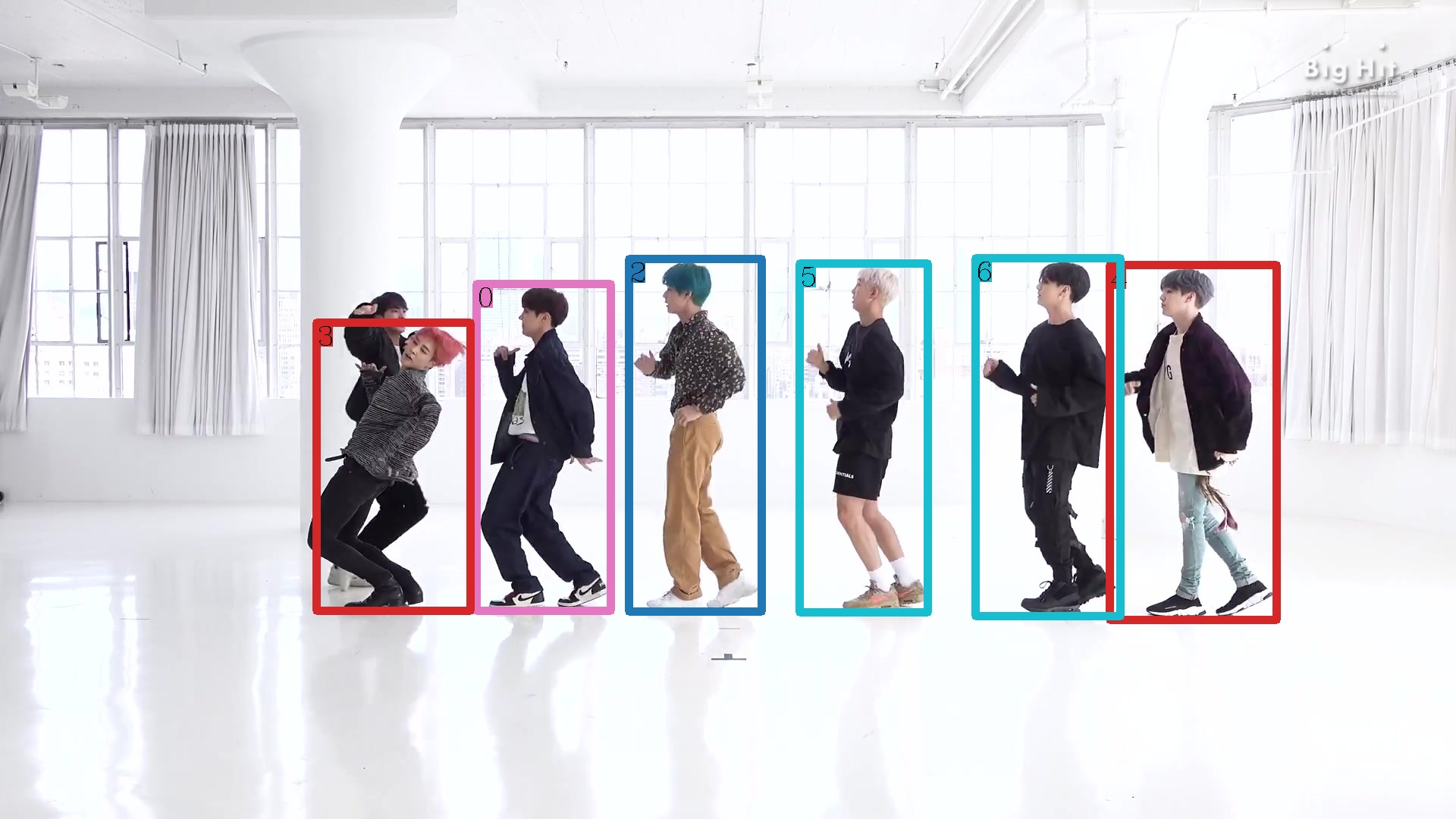} \\
\raisebox{+1.7\normalbaselineskip}[0pt][0pt]{\rotatebox[origin=c]{90}{QD-Walker}}    & \includegraphics[width=0.19\textwidth,trim={10.0cm 10.0cm 10.0cm 1.0cm},clip]{figures/supplement/vis/dancetrack/0081/demo/walker/000034.jpg}  & \includegraphics[width=0.19\textwidth,trim={10.0cm 10.0cm 10.0cm 1.0cm},clip]{figures/supplement/vis/dancetrack/0081/demo/walker/000037.jpg} & \includegraphics[width=0.19\textwidth,trim={10.0cm 10.0cm 10.0cm 1.0cm},clip]{figures/supplement/vis/dancetrack/0081/demo/walker/000040.jpg} & \includegraphics[width=0.19\textwidth,trim={10.0cm 10.0cm 10.0cm 1.0cm},clip]{figures/supplement/vis/dancetrack/0081/demo/walker/000043.jpg} & \includegraphics[width=0.19\textwidth,trim={10.0cm 10.0cm 10.0cm 1.0cm},clip]{figures/supplement/vis/dancetrack/0081/demo/walker/000046.jpg} \\
\raisebox{+1.8\normalbaselineskip}[0pt][0pt]{\rotatebox[origin=c]{90}{Walker}}    & \includegraphics[width=0.19\textwidth,trim={10.0cm 10.0cm 10.0cm 1.0cm},clip]{figures/supplement/vis/dancetrack/0081/demo/walker_plus_plus/000034.jpg}  & \includegraphics[width=0.19\textwidth,trim={10.0cm 10.0cm 10.0cm 1.0cm},clip]{figures/supplement/vis/dancetrack/0081/demo/walker_plus_plus/000037.jpg} & \includegraphics[width=0.19\textwidth,trim={10.0cm 10.0cm 10.0cm 1.0cm},clip]{figures/supplement/vis/dancetrack/0081/demo/walker_plus_plus/000040.jpg} & \includegraphics[width=0.19\textwidth,trim={10.0cm 10.0cm 10.0cm 1.0cm},clip]{figures/supplement/vis/dancetrack/0081/demo/walker_plus_plus/000043.jpg} & \includegraphics[width=0.19\textwidth,trim={10.0cm 10.0cm 10.0cm 1.0cm},clip]{figures/supplement/vis/dancetrack/0081/demo/walker_plus_plus/000046.jpg}
\end{tabular}
  \caption{Tracking results on the sequence \textit{0081} of the DanceTrack validation set. We analyze 5 frames centered around the frame \#40 at time $\hat{t}$ and spaced by $k\mkern1.5mu{=}\mkern1.5mu\text{3/30}$ seconds. We compare the self-supervised trackers QDTrack-S~\cite{fischer2022qdtrack}, QD-Walker (ours), and Walker (ours). On each row, boxes of the same color correspond to the same tracking ID. For ease of visualization, we crop all frames around an area of interest.}  \label{fig:vis_dancetrack_demo_0081}
  \vspace{-3em}
\end{figure*}

\begin{figure*}[]
\centering
\scriptsize
\setlength{\tabcolsep}{1pt}
\begin{tabular}{cccccc}
 & $t=\hat{t}-2k$ & $t=\hat{t}-k$  & $t=\hat{t}$  & $t=\hat{t}+k$  & $t=\hat{t}+2k$ \\
\raisebox{+1.9\normalbaselineskip}[0pt][0pt]{\rotatebox[origin=c]{90}{QDTrack-S}} & \includegraphics[width=0.19\textwidth,trim={10.0cm 10.0cm 10.0cm 1.0cm},clip]{figures/supplement/vis/dancetrack/0081/errors/static_qdtrack/000034.jpg} & \includegraphics[width=0.19\textwidth,trim={10.0cm 10.0cm 10.0cm 1.0cm},clip]{figures/supplement/vis/dancetrack/0081/errors/static_qdtrack/000037.jpg} & \includegraphics[width=0.19\textwidth,trim={10.0cm 10.0cm 10.0cm 1.0cm},clip]{figures/supplement/vis/dancetrack/0081/errors/static_qdtrack/000040.jpg} & \includegraphics[width=0.19\textwidth,trim={10.0cm 10.0cm 10.0cm 1.0cm},clip]{figures/supplement/vis/dancetrack/0081/errors/static_qdtrack/000043.jpg} & \includegraphics[width=0.19\textwidth,trim={10.0cm 10.0cm 10.0cm 1.0cm},clip]{figures/supplement/vis/dancetrack/0081/errors/static_qdtrack/000046.jpg} \\
\raisebox{+1.7\normalbaselineskip}[0pt][0pt]{\rotatebox[origin=c]{90}{QD-Walker}}    & \includegraphics[width=0.19\textwidth,trim={10.0cm 10.0cm 10.0cm 1.0cm},clip]{figures/supplement/vis/dancetrack/0081/errors/walker/000034.jpg}  & \includegraphics[width=0.19\textwidth,trim={10.0cm 10.0cm 10.0cm 1.0cm},clip]{figures/supplement/vis/dancetrack/0081/errors/walker/000037.jpg} & \includegraphics[width=0.19\textwidth,trim={10.0cm 10.0cm 10.0cm 1.0cm},clip]{figures/supplement/vis/dancetrack/0081/errors/walker/000040.jpg} & \includegraphics[width=0.19\textwidth,trim={10.0cm 10.0cm 10.0cm 1.0cm},clip]{figures/supplement/vis/dancetrack/0081/errors/walker/000043.jpg} & \includegraphics[width=0.19\textwidth,trim={10.0cm 10.0cm 10.0cm 1.0cm},clip]{figures/supplement/vis/dancetrack/0081/errors/walker/000046.jpg} \\
\raisebox{+1.8\normalbaselineskip}[0pt][0pt]{\rotatebox[origin=c]{90}{Walker}}    & \includegraphics[width=0.19\textwidth,trim={10.0cm 10.0cm 10.0cm 1.0cm},clip]{figures/supplement/vis/dancetrack/0081/errors/walker_plus_plus/000034.jpg}  & \includegraphics[width=0.19\textwidth,trim={10.0cm 10.0cm 10.0cm 1.0cm},clip]{figures/supplement/vis/dancetrack/0081/errors/walker_plus_plus/000037.jpg} & \includegraphics[width=0.19\textwidth,trim={10.0cm 10.0cm 10.0cm 1.0cm},clip]{figures/supplement/vis/dancetrack/0081/errors/walker_plus_plus/000040.jpg} & \includegraphics[width=0.19\textwidth,trim={10.0cm 10.0cm 10.0cm 1.0cm},clip]{figures/supplement/vis/dancetrack/0081/errors/walker_plus_plus/000043.jpg} & \includegraphics[width=0.19\textwidth,trim={10.0cm 10.0cm 10.0cm 1.0cm},clip]{figures/supplement/vis/dancetrack/0081/errors/walker_plus_plus/000046.jpg}
\end{tabular}
  \caption{ID switches on the sequence \textit{0081} of the DanceTrack validation set. We analyze 5 frames centered around the frame \#40 at time $\hat{t}$ and spaced by $k\mkern1.5mu{=}\mkern1.5mu\text{3/30}$ seconds. We compare the self-supervised trackers QDTrack-S~\cite{fischer2022qdtrack}, QD-Walker (ours), and Walker (ours). On each row, boxes colored in green are correctly tracked, while blue ones represent ID switches. For ease of visualization, we crop around an area of interest.}  \label{fig:vis_dancetrack_errors_0081}
\end{figure*}

\immediate\closein\imgstream

\end{document}